\documentclass[sigconf, nonacm]{acmart}

\AtBeginDocument{%
  }

\usepackage{tikz}
\usepackage{amsthm}
\usepackage{autobreak}
\allowdisplaybreaks
\usepackage{adjustbox}
\usepackage{booktabs}
\usepackage{makecell}
\usepackage{siunitx}
\usepackage{multirow}
\usepackage{balance}
\usepackage{longtable}
\usepackage{algorithm}
\usepackage[algo2e]{algorithm2e}
\usepackage{algorithmic}
\usepackage{bm}
\usepackage{graphicx}
\usepackage{placeins}
\usepackage{amsmath}

\usepackage{booktabs}
\usepackage{tabularx}
\usepackage{listings}
\usepackage{fix-cm}
\usepackage{hyperref}
\usepackage{wasysym}

\usepackage{amssymb}
\usepackage{array}
\usepackage{tabularx}
\usepackage{caption}
\usepackage{xcolor}
\usepackage{bm}
\usepackage{amsthm}

\usepackage{enumitem}
\usepackage{subcaption}

\usepackage[framemethod=tikz]{mdframed}
\usepackage{colortbl}
\usepackage{xcolor,soul}
\definecolor{lightyellow}{RGB}{255,255,153}
\definecolor{lightblue}{RGB}{153,255,255}

\newcommand{\pgftextcircled}[1]{
    \setbox0=\hbox{#1}%
    \dimen0\wd0%
    \divide\dimen0 by 2%
    \begin{tikzpicture}[baseline=(a.base)]%
        \useasboundingbox (-\the\dimen0,0pt) rectangle (\the\dimen0,1pt);
        \node[circle,draw,outer sep=0pt,inner sep=0.1ex] (a) {#1};
    \end{tikzpicture}
}

\sethlcolor{lightyellow}

\colorlet{highlightblue}{cyan!30}

\theoremstyle{definition}
\theoremstyle{remark}

\newtheoremstyle{bfnote}%
  {}{}
  {\itshape}{}
  {\bfseries}{.}
  { }{\thmname{#1}\thmnumber{ #2}\thmnote{ (#3)}}
\theoremstyle{bfnote}

\hypersetup{
  colorlinks,
  linkcolor={blue!70!green},
  citecolor={green!70!blue},
  urlcolor={orange!70!red}
}

\usepackage[most]{tcolorbox}
\usepackage{xcolor}
\usepackage{listings}

\lstdefinestyle{promptstyle}{
    basicstyle=\ttfamily\small,
    breaklines=true,
    columns=fullflexible,
    keepspaces=true,
    showstringspaces=false
}

\newcommand{\para}[1]{\noindent\textbf{{#1}}}
\def\ourmethod{DP-SelFT}

\begin{document}

\title{\ourmethod{}: Differentially Private Selective Fine-Tuning for Large Language Models} 
\begin{abstract}
Large language models (LLMs) are increasingly adapted to downstream tasks through fine-tuning rather than training task-specific models from scratch. However, fine-tuning data is often privacy-sensitive, and the resulting model may leak training information through attacks such as membership inference or data extraction. Differential privacy (DP) provides a principled way to mitigate such risks, but differentially private LLM fine-tuning often suffers from substantial utility degradation. This challenge is especially severe for large models, where gradient clipping and noise injection significantly distort optimization. To improve utility, prior work has combined DP with parameter-efficient fine-tuning (PEFT) methods such as LoRA, which reduce the trainable model size by constraining the \emph{form} of updates. However, the utility gap between DP and non-DP fine-tuning remains substantial.

In this work, we study a complementary direction: selection-based PEFT, which restricts \emph{where} updates are applied. Existing non-DP methods often use gradient- or loss-based criteria to identify parameters worth training, but extending this idea to DP faces three obstacles: selection itself consumes privacy budget, the required data-dependent estimates become noisy and unstable under DP perturbations, and non-DP selectors do not account for the damage that noisy updates may cause to the pre-trained model. To address these challenges, we propose \ourmethod{}, a framework for differentially private selective fine-tuning of LLMs. \ourmethod{} first performs lightweight differentially private synthetic data generation and then carries out selection on the resulting synthetic data, eliminating repeated privacy cost during selection. To improve stability, \ourmethod{} performs layer-level selection and ranks candidate layer subsets by validation performance on a synthetic validation split after temporary training on a synthetic training split. Most importantly, this temporary training is conducted under a perturbation regime matched to downstream DP fine-tuning, so that selection captures not only learning signal but also noise-induced damage, which is crucial in the DP setting. To further improve robustness to unseen perturbations, \ourmethod{} replaces random perturbations with worst-case perturbations of the same scale during selection. Intuitively, if a layer subset remains effective even under worst-case perturbation, it is more likely to remain robust under the stochastic perturbations encountered during actual DP fine-tuning. Extensive experiments on benchmark tasks show that \ourmethod{} consistently achieves a stronger privacy--utility trade-off than existing DP fine-tuning baselines under the same privacy guarantees.
\end{abstract}

\author{Haichao Sha}
\authornote{The first two authors contributed equally to this work.}
\affiliation{
  \institution{Renmin University of China}
  \city{Beijing}
  \country{China}
}
\email{sha@ruc.edu.cn}

\author{Zihao Wang}
\authornotemark[1]
\authornote{Correspondence to 
\href{mailto:zihao.wang@ntu.edu.sg}{zihao.wang@ntu.edu.sg} and 
\href{mailto:wei_dong@ntu.edu.sg}{wei\_dong@ntu.edu.sg}.}
\affiliation{
  \institution{Nanyang Technological University}
  \country{Singapore}
}
\email{zihao.wang@ntu.edu.sg}

\author{Yuncheng Wu}
\affiliation{
  \institution{Renmin University of China}
  \city{Beijing}
  \country{China}
}
\email{wuyuncheng@ruc.edu.cn}

\author{Hong Chen}
\affiliation{
  \institution{Renmin University of China}
  \city{Beijing}
  \country{China}
}
\email{chong@ruc.edu.cn}

\author{Wei Dong}
\authornotemark[2]
\affiliation{
  \institution{Nanyang Technological University}
  \country{Singapore}
}
\email{wei_dong@ntu.edu.sg}

\maketitle

\section{Introduction}

Large language models (LLMs) have become a standard foundation for modern AI systems. After large-scale pre-training, they are commonly adapted to downstream tasks through prompting or fine-tuning. While prompting is lightweight, fine-tuning remains essential for applications that require stronger specialization, better controllability, or domain-specific expertise.
However, the data used for fine-tuning is often sensitive, such as personal records, proprietary documents, or healthcare data. In such settings, sending data to remote APIs may be prohibited by regulatory, contractual, or organizational constraints~\cite{act1996health}. As a result, practitioners often need to fine-tune open-source foundation models locally, even when these models are less capable than commercial alternatives. This makes privacy-preserving local fine-tuning an important problem.

Differential privacy (DP)~\cite{dwork2006calibrating} provides a rigorous way to protect individual training examples. For deep learning, the standard approach is differentially private stochastic gradient descent (DP-SGD)~\cite{abadi2016deep}, which clips per-example gradients and adds calibrated Gaussian noise before each update. Adaptive private optimizers such as DP-AdamW~\cite{li2021large} have further improved optimization stability and empirical performance in LLM fine-tuning settings.

\para{Challenges.} Nevertheless, despite these advances, DP fine-tuning still often suffers from substantial utility degradation compared to non-private training.
A key reason is scale. LLMs contain a massive number of parameters, and useful task-specific updates are often distributed across a high-dimensional parameter space. Under DP, the optimization signal is distorted by gradient clipping and noise injection, making it harder to recover the fine-grained update directions needed for adaptation. At the same time, large pre-trained models can be surprisingly sensitive to parameter perturbations: even small noisy deviations may damage useful pre-trained structure and noticeably degrade downstream behavior. These difficulties naturally motivate \emph{parameter-efficient fine-tuning} (PEFT)~\cite{karimi2021compacter, houlsby2019parameter, hu2022lora}, which restricts training to a smaller set of parameters and thus reduces the effective dimension of the noisy optimization problem.

Existing work has explored integrating DP with non-selection-based PEFT methods such as LoRA~\cite{tsai2025differentially, yu2021differentially}. These methods constrain the \emph{form} of the update, for example by learning low-rank adaptations instead of updating the full parameter space. This structured parameterization indeed improves the privacy--utility trade-off relative to full-parameter DP fine-tuning. However, the gap between DP and non-DP fine-tuning remains substantial even with such designs, motivating the need for more effective PEFT schemes.

A complementary line of work in non-private settings studies \emph{selection-based} efficient fine-tuning, which automatically identifies and updates only the most important parameters for a given task~\cite{zhang2024gradient, he2023sensitivity}. This direction is orthogonal to non-selection-based methods such as LoRA: the latter constrains \emph{how} updates are parameterized, while the former constrains \emph{where} updates are applied. Despite its strong promise, this direction remains largely unexplored in differentially private LLM fine-tuning.

A natural idea is to directly adapt existing selection methods to the DP setting. However, we find that this is far from straightforward. First, \emph{selection itself consumes privacy budget}. Most existing methods rely on data-dependent statistics such as gradients or losses to estimate parameter importance~\cite{zhang2024gradient, he2023sensitivity}. Under DP, these statistics must be privatized, which either incurs additional privacy cost or produces highly noisy estimates. Second, \emph{importance estimation becomes unstable under noise}. Selection is fundamentally a ranking problem, and DP noise can easily distort the relative order among candidates, especially in the high-dimensional regime of LLMs. Third, and most fundamentally, there is an \emph{objective mismatch} between standard selection criteria and DP fine-tuning. Classical selectors aim to find parameters that are most useful for reducing loss under clean optimization. Under DP, however, the actual updates are clipped and noisy. A parameter subset that looks beneficial in a clean setting may be a poor choice if noisy updates on that subset cause disproportionate damage to the pre-trained model.

\para{Our solution.} In this work, we propose \ourmethod{}, a framework for differentially private selective fine-tuning of LLMs. Our design is built around observations corresponding to the above challenges. 

\textit{First}, to avoid repeated privacy expenditure during selection, we decouple selection from the private dataset through a lightweight differentially private synthetic-data construction stage. The resulting synthetic data is then treated as public for downstream selection, so the selection process itself incurs no additional privacy cost. We further split this synthetic dataset into a training split and a validation split. For each candidate layer subset, we first perform temporary training on the synthetic training split, and then evaluate the resulting model on the validation split. The validation performance is used as the criterion for layer selection.

\textit{Second}, instead of selecting individual parameters, we perform selection at the layer level. This coarser granularity yields substantially more stable estimates and improves transferability from synthetic data to the real private task. 

\textit{Third}, and most importantly, we argue that selection in the DP setting should be carried out under the same perturbation regime as the eventual private fine-tuning stage, rather than under clean optimization. In particular, candidate subsets should be evaluated in the presence of both gradient clipping and noise injection, since these operations fundamentally alter the update dynamics and determine which parts of the model can be trained reliably under privacy constraints.
Building on this principle, we introduce worst-case perturbation into the selection stage. For each candidate layer subset, \ourmethod{} performs temporary training on the synthetic training split under a perturbation-aware update rule, and then uses validation performance on the synthetic validation split as the selection criterion. Instead of evaluating a subset under one sampled random perturbation, \ourmethod{} evaluates it under an adversarial perturbation with the same scale as the target DP noise.
This design reflects our central insight: under DP, a good trainable subset should be not only \emph{learnable}, but also robust to noisy updates. Worst-case perturbation provides a conservative stress test, favoring layer subsets that remain effective under unfavorable perturbations and are therefore more likely to generalize to the stochastic noise encountered during actual DP fine-tuning.

\para{Empirical results.}
We conduct extensive experiments on four foundation models across four GLUE benchmark datasets~\cite{GLUE}, covering sentiment analysis, natural language inference, and paraphrase detection, under three privacy budgets. The results show that \ourmethod{} improves downstream utility over five baselines, including full-parameter fine-tuning, existing DP fine-tuning methods~\cite{karimi2021compacter, houlsby2019parameter, hu2022lora}, and manually designed heuristic layer selection~\cite{liu2023differentially}, while preserving the same privacy guarantees. For example, at $\varepsilon=5$, \ourmethod{} improves the average accuracy of OPT-350M from $64.99\%$ to $67.34\%$, yielding a gain of $2.35\%$.
Moreover, ablation studies show that naive adaptations of non-private selective fine-tuning methods are suboptimal, confirming the need for a DP-specific design. Further ablations and sensitivity analyses validate each component of \ourmethod{} and show stable performance across different numbers of selected layers, privacy budget allocations, and model sizes.

Finally, we provide theoretical analysis (see \autoref{subsec:analysis}) showing that \ourmethod{} improves the privacy--utility trade-off by balancing two DP-specific forces: preserving useful learning signal and reducing noise-induced model damage. Our analysis further justifies the use of worst-case perturbation as a robust selection criterion that better generalizes to unseen DP noise.

\para{Contributions.}
Our key contributions are summarized as follows:

\noindent$\bullet$\textit{~Parameter selection for DP fine-tuning.}
We identify selection-based PEFT as a promising but underexplored direction for differentially private LLM fine-tuning. Unlike LoRA-style methods that constrain the \emph{form} of updates, parameter selection constrains \emph{where} updates are applied. Building on this observation, we propose \ourmethod{}, a DP-specific selective fine-tuning framework that selects trainable layers under privacy and noise constraints.

\noindent$\bullet$\textit{~Theoretical understandings.}
We provide theoretical analysis explaining why selective fine-tuning can improve the DP privacy--utility trade-off. Our results show that layer selection induces a trade-off between retained learning signal and reduced DP noise damage, and that worst-case perturbation yields a more robust selection criterion for unseen DP noise.

\noindent$\bullet$\textit{~Extensive empirical studies.}
We conduct comprehensive experiments across multiple models, tasks, and privacy budgets. The results show that \ourmethod{} consistently improves utility over existing DP fine-tuning and heuristic layer-selection baselines under the same privacy guarantees. Ablation and sensitivity studies further validate the contribution of each design component.
\section{Background and Related Work}
\label{sec:back}

\subsection{LLM Fine-tuning}
\label{subsec:LLM Fine-tuning}

Large language models (LLMs) have become a core building block for a wide range of applications. In practice, these applications are often built on top of a foundation model and then adapted to downstream tasks. Two common adaptation strategies are in-context learning through prompting and parameter updating through fine-tuning. Prompting is lightweight and easy to deploy, while fine-tuning is often preferred when developers need stronger task specialization, better controllability, or domain-specific expertise.

However, the current LLM ecosystem presents an important practical constraint. The most capable models are typically proprietary and only accessible through remote APIs. By contrast, open-source models that can be deployed locally are often weaker in capability. Moreover, even when model providers expose fine-tuning APIs, such support is usually limited to only a subset of their models. In many cases, newer or more capable models are available for inference only and cannot be directly fine-tuned by users.

This limitation is further compounded by privacy and trust concerns. In many real-world scenarios, the training data contains sensitive or proprietary information (e.g., user data, enterprise documents, or medical records). Sending such data to external APIs (either through prompting or fine-tuning) introduces significant privacy risks and may violate regulatory or organizational constraints. As a result, practitioners often resort to locally fine-tuning open-source models, even at the cost of reduced model quality, in order to maintain full control over data and training processes.

This trade-off between model capability and data privacy has made local fine-tuning an important practical paradigm. It also motivates the study of fine-tuning methods under realistic privacy and security constraints.

\subsection{Parameter-Efficient Fine-tuning}
\label{subsec:PEFT}

Building on the practical constraints discussed above, parameter-efficient fine-tuning (PEFT) has emerged as a key paradigm for adapting large models under limited computational and data budgets. Instead of updating all model parameters, PEFT methods restrict training to a small subset of parameters or introduce lightweight trainable components, thereby significantly reducing memory, computation, and storage overhead.
Existing PEFT approaches can be broadly categorized into two orthogonal directions: \emph{non-selection-based} methods and \emph{selection-based} methods. The former introduces structured parameterizations to reduce the number of trainable parameters, while the latter explicitly selects a subset of existing parameters to update. These two directions are largely complementary and can often be combined in practice.

\para{Non-selection-based methods.}
A prominent line of work focuses on modifying the parameterization of fine-tuning without explicitly selecting parameters.
Low-Rank Adaptation (LoRA)~\cite{hu2022lora} models parameter updates as low-rank decompositions added to existing weight matrices, effectively reducing the number of trainable parameters while preserving expressiveness.
Adapter-based methods, introduced by Houlsby et al.~\cite{houlsby2019parameter}, insert small bottleneck layers with residual connections after attention and feed-forward modules, enabling task-specific adaptation without modifying the backbone parameters.
Subsequent work further improves parameter efficiency; for example, Compacter~\cite{karimi2021compacter} replaces dense adapter matrices with tensorized parameter sharing, significantly reducing the number of trainable parameters while maintaining performance.

\para{Selection-based methods.}
Another line of work aims to identify and update only the most important parameters for a given task.
Gradient-based selection methods~\cite{zhang2024gradient} prioritize parameters with large gradients, under the intuition that these parameters contribute most to the loss landscape and thus benefit more from adaptation.
Similarly, loss-based selection approaches~\cite{he2023sensitivity} evaluate the impact of tuning individual parameters or modules by measuring the resulting loss reduction, selecting those that yield the largest loss reduction.

These two paradigms capture complementary aspects of efficient adaptation: non-selection-based methods constrain the \emph{form} of updates, while selection-based methods constrain the \emph{location} of updates. As a result, hybrid approaches (e.g., applying LoRA only to selected layers or parameters) have been explored to further improve efficiency. This design space highlights an important trade-off between parameter efficiency, optimization flexibility, and downstream performance, which remains an active area of research.

\subsection{Differentially Private Machine Learning}
\label{subsec:DPML}

\textit{Differential privacy}~\cite{dwork2006calibrating} (DP) provides a rigorous framework for limiting the leakage of individual-level information in data analysis and machine learning. At a high level, DP ensures that the output of a randomized mechanism does not depend significantly on any single data record. This guarantees that an adversary observing the output cannot reliably infer whether a particular individual’s data was included in the training set.

Formally, DP is defined as follows:

\begin{definition}[($\varepsilon$,$\delta$)-Differential Privacy]
Given two neighboring datasets $D$ and $D'$ differing by one record, a randomized mechanism $\mathcal{M}$ is said to satisfy ($\varepsilon$,$\delta$)-differential privacy if for all measurable sets $S$,
$$
Pr[\mathcal{M}(D)\in S] \leq e^{\varepsilon } \cdot Pr[\mathcal{M}(D') \in S] + \delta,
$$
where $\varepsilon$ is the privacy budget and $\delta$ is a small failure probability.
\end{definition}

A smaller $\varepsilon$ implies stronger privacy, as it bounds how much the output distribution can change with respect to any individual record.

\para{Key properties.}
Two properties make DP particularly suitable for machine learning.
First, \emph{sequential composition} states that privacy loss accumulates across multiple accesses to the same dataset: if $\mathcal{M}_1, \ldots, \mathcal{M}_k$ are applied sequentially with budgets ${\varepsilon_1, \ldots, \varepsilon_k}$, the overall mechanism satisfies $\varepsilon = \sum_i \varepsilon_i$.
Second, \emph{post-processing invariance} ensures that any transformation applied to the output of a DP mechanism does not incur additional privacy cost. These properties allow DP guarantees to be tracked modularly throughout complex training pipelines.

\para{DP-SGD.}
The standard approach for training deep models under differential privacy~\cite{wei2023dpmlbench} is \emph{differentially private stochastic gradient descent} (DP-SGD)~\cite{abadi2016deep}. At each iteration, given a mini-batch $\mathcal{B}$, per-example gradients $g_i = \nabla_{\theta} \mathcal{L}(\theta; x_i)$ are first clipped to bound their sensitivity:
\begin{equation}
\tilde{g}_i = \frac{g_i}{\max\left(1, \frac{|g_i|*2}{C}\right)},
\end{equation}
where $C$ is the clipping norm. The clipped gradients are then aggregated with Gaussian noise:
\begin{equation}
\bar{g} = \frac{1}{|\mathcal{B}|} \left( \sum*{i \in \mathcal{B}} \tilde{g}_i + \mathcal{N}(0, \sigma^2 C^2 \mathbf{I}) \right),
\end{equation}
followed by a standard parameter update $\theta \leftarrow \theta - \eta \bar{g}$. The clipping step limits the influence of any individual sample, while the injected noise ensures indistinguishability between neighboring datasets.

To track the cumulative privacy loss over multiple iterations, modern implementations adopt \emph{R'enyi Differential Privacy} (RDP)~\cite{mironov2019r}. A mechanism $\mathcal{M}$ satisfies $(\alpha, \varepsilon)$-RDP if for any neighboring datasets $D, D'$,
\begin{equation}
D_{\alpha}(\mathcal{M}(D),|,\mathcal{M}(D')) \leq \varepsilon,
\end{equation}
where $D_{\alpha}(\cdot|\cdot)$ is the R'enyi divergence of order $\alpha>1$. RDP composes additively across iterations, enabling tight accounting of the total privacy cost, which can be converted back to an $(\varepsilon,\delta)$ guarantee. This tighter analysis is particularly important for iterative training procedures such as DP-SGD.

\para{DP-AdamW.}
Recent work extends adaptive optimizers such as AdamW to the private setting, often referred to as DP-AdamW~\cite{li2021large}. DP-AdamW retains gradient clipping and noise injection while incorporating momentum and adaptive learning rates, which improves convergence and stability for large-scale DP fine-tuning.

\para{DP and parameter-efficient fine-tuning.}
Applying DP to LLM fine-tuning remains challenging due to the large number of trainable parameters and the sensitivity of gradients. PEFT offers a natural direction to mitigate these issues by reducing the effective parameter space. Existing work has primarily explored integrating DP with non-selection-based methods such as LoRA~\cite{tsai2025differentially, yu2021differentially}, leveraging their reduced parameterization to improve utility under noise. In contrast, the combination of DP with selection-based methods remains largely under-explored. In particular, systematically identifying which parameters should be updated under DP constraints (where both gradient noise and privacy budget interact) poses a non-trivial design challenge.

\subsection{Threat Model}
\label{subsec:threat}

We consider a setting where a defender aims to fine-tune a language model on sensitive data while providing formal privacy guarantees.

\para{Defender capabilities.}
The defender has access to (i) a locally deployable open-source foundation model (e.g., from HuggingFace), and (ii) general-purpose commercial LLM APIs (e.g., \texttt{GPT-4o}) for auxiliary usage. The defender can perform full training and fine-tuning locally on the open-source model, and may use external APIs for non-sensitive processing or general-purpose tasks.

\para{Privacy constraints.}
The training data contains sensitive information (e.g., user data or proprietary documents) that must not leave the local environment. As a result, the defender cannot directly use API-based fine-tuning services, nor can they expose raw data through prompting. All operations involving private data must be performed locally with formal privacy guarantees.

This assumption is realistic in many practical scenarios where data is subject to strict regulatory or organizational constraints (e.g., healthcare, finance, or enterprise settings), which prohibit transmitting raw data to third-party services.

\para{Learning objective.}
The defender aims to fine-tune the open-source model under differential privacy, ensuring that the resulting model satisfies a target $(\varepsilon,\delta)$-DP guarantee. At the same time, the defender seeks to maximize downstream utility despite the performance degradation introduced by DP mechanisms such as gradient clipping and noise injection.

\section{Differentially Private LLM Fine-tuning}
\label{sec:observation}

\subsection{Motivation}

Differentially private fine-tuning of LLMs often exhibits a clear utility gap compared to non-private fine-tuning. While this challenge exists more broadly in deep learning, it becomes particularly pronounced in LLMs due to their scale and sensitivity.

One reason is that DP training fundamentally perturbs the optimization process. In DP-SGD, each update is computed from clipped per-example gradients with additional Gaussian noise. For small models, such perturbations may still allow the optimizer to follow a relatively stable descent direction. For LLMs, however, the parameter space is extremely high-dimensional, and useful task-specific updates are often distributed across many parameters. As a result, the signal carried by each update is already weak and diffuse, making it more vulnerable to distortion after clipping and noise injection. The larger the model, the harder it becomes for a noisy optimizer to consistently recover the fine-grained update directions needed for adaptation.

Another reason is that modern pre-trained language models are not uniformly robust to parameter perturbations. Although they contain substantial redundancy, their downstream behavior can still be highly sensitive to changes in certain layers or subspaces. Prior observations in model robustness and parameter perturbation suggest that even relatively small weight changes can noticeably alter model behavior~\cite{wang2024tossing, hong2019terminal}. Under DP, this issue becomes more severe because the perturbation is not adversarially optimized or task-aware; instead, it is injected indiscriminately as part of the privacy mechanism. Consequently, DP fine-tuning must operate under a stricter constraint: it must adapt the model to a new task while simultaneously avoiding excessive damage to the useful structure already encoded in the pre-trained weights.

These observations suggest that the difficulty of DP LLM fine-tuning is not merely that ``noise is added''. Rather, the key issue is that privacy noise interacts poorly with the scale and fragility of large pre-trained models. This naturally motivates restricting the trainable space so that optimization is concentrated on a smaller and more controllable subset of parameters.

Parameter-efficient fine-tuning (PEFT) provides such a mechanism. By reducing the number of trainable parameters, PEFT narrows the space in which noisy optimization takes place. This can help in at least two ways. First, it reduces the effective complexity of adaptation, making it easier for the optimizer to extract a useful signal from noisy gradients. Second, it limits the portion of the model directly exposed to noisy updates, which can preserve more of the original pre-trained knowledge. Existing work has therefore explored combining DP with non-selection-based PEFT methods such as LoRA~\cite{tsai2025differentially, yu2021differentially}, where updates are constrained to low-rank adapters rather than the full parameter space.

This design is intuitively appealing because LoRA imposes structure on the update itself. Instead of learning an unrestricted perturbation to each weight matrix, it learns a low-rank update, which acts as an implicit regularizer. Under DP noise, this structured parameterization can be more stable than full fine-tuning, since the optimizer only needs to recover a smaller number of trainable directions. Empirically, this indeed improves the privacy--utility trade-off relative to full-parameter DP fine-tuning~\cite{yu2021differentially}.

However, existing PEFT solutions are still insufficient. Even under LoRA-based DP fine-tuning, prior work reports non-trivial utility drops, including several-point degradations on benchmarks such as GLUE~\cite{GLUE}. On such benchmarks, a drop of a few percentage points is already substantial, especially when the remaining performance gap between methods is often narrow. This suggests that merely constraining the \emph{form} of the update is not enough. There remains room to better control \emph{where} the noisy updates are applied.

This motivates us to explore more effective PEFT strategies for DP fine-tuning, to further improving the privacy--utility trade-off.

\subsection{Challenges}
\label{subsec:challenge}

A natural next step is to consider parameter selection. In non-private settings, a rich line of work has shown that not all parameters are equally worth updating: some contribute more to downstream adaptation, while others can remain frozen with little performance loss. This suggests an appealing possibility for DP fine-tuning: if one could identify a particularly suitable subset of parameters, then noisy updates could be concentrated on that subset, potentially yielding better utility with the same privacy budget.

At first glance, existing selection-based methods seem directly applicable here. For example, gradient-based methods~\cite{zhang2024gradient} prioritize parameters with large gradients, based on the intuition that these parameters are currently most relevant to reducing the objective. Loss-based methods~\cite{he2023sensitivity} instead evaluate how much improvement is obtained when certain parameters are updated, and select those that appear most beneficial. In standard fine-tuning, such criteria are often reasonable proxies for learnability.

However, directly transplanting these methods into the DP setting introduces several complications.

\para{Privacy cost of selection.}
The first challenge is that selection itself is a data-dependent procedure. To decide which parameters are important, the algorithm must query the private training data to estimate gradients, losses, or sensitivities. Under DP, such queries are not free: they consume privacy budget just like training updates do. This creates an immediate tension. If substantial privacy budget is spent on accurately identifying good parameters, less budget remains for the actual fine-tuning stage. Conversely, if only a very small budget is allocated to selection, the resulting estimates may be too noisy to be useful. Therefore, unlike in non-private settings, selection cannot be treated as a cheap preprocessing step.

This issue is especially severe in LLMs, where the candidate parameter space is enormous. Estimating importance over such a large space is already statistically difficult, and DP noise makes it harder still. In practice, this means that selection may only be performed once, or at most a few times, under a limited privacy budget. As a result, the estimate must be both cheap and reliable, which existing methods are not designed for.

\para{Noisy and unstable importance estimation.}
A second challenge is that the signals used by classical selection methods become much less reliable under DP noise. Gradient magnitudes, loss reductions, and sensitivity scores are all fine-grained quantities. In non-private optimization, they can already be noisy due to mini-batch variation. Under DP, they must additionally be clipped and perturbed. This can substantially distort the ranking of parameters, especially when many candidate parameters have similar importance. In such cases, the noise may dominate the true differences, causing the selected subset to be unstable or even systematically misleading.

This problem is not merely one of reduced precision. Selection is fundamentally a ranking problem: what matters is not only estimating each parameter well in isolation, but also preserving the relative order among many parameters. DP noise can easily disrupt this ordering, particularly in the high-dimensional regime of LLMs. Therefore, a naive DP version of an existing selector may fail even if its non-private counterpart works well.

\para{Objective mismatch in the DP setting.}
More fundamentally, existing selection methods optimize the wrong objective for DP fine-tuning. In non-private settings, it is natural to select parameters that appear most helpful for reducing loss, since updates can be applied relatively faithfully. Under DP, however, the actual update is a noisy approximation of the desired direction. Therefore, a parameter that looks highly useful in principle may still be a poor choice if its update is especially fragile to clipping and noise.

This introduces a DP-specific objective mismatch. Under privacy constraints, a good parameter subset should not only be \emph{learnable}, in the sense that updating it can improve task performance, but also \emph{robust}, in the sense that the model can tolerate noisy updates on that subset without incurring disproportionate damage. These two properties are related but not identical. A parameter may have a large gradient because it is highly task-relevant, but that same sensitivity may also mean that noisy perturbations on it are particularly destructive. Conversely, some parameters may support smaller but more stable improvements under noise.

This distinction is largely absent in prior selection-based fine-tuning, because standard training does not need to account for systematic privacy noise. In the DP setting, by contrast, preserving the useful structure of the pre-trained model becomes part of the optimization problem itself. The goal is no longer just to identify parameters worth learning, but to identify parameters worth learning \emph{under noisy and privacy-constrained updates}.

\subsection{Key Insights}
\label{subsec:insight}

The challenges above suggest that an effective selection strategy for DP LLM fine-tuning should satisfy three requirements simultaneously: it should incur little additional privacy cost, produce stable importance estimates, and align the selection criterion with the noisy optimization dynamics of DP training. Our design is guided by the following insights.

\para{Insight 1: Decoupling selection from private data via DP synthetic data.}
The main obstacle in applying selection-based methods under DP is that selection itself is data-dependent and therefore consumes privacy budget. A natural way to break this dependency is to first construct a privacy-preserving surrogate dataset, and then perform selection entirely on that surrogate.

Concretely, we allocate a small portion of the overall privacy budget to \emph{differentially private synthetic data generation}~\cite{xie2024differentially, yue2023synthetic, kurakin2023harnessing, mattern2022differentially}, a well-studied problem with existing practical solutions. Once generated, the synthetic data can be treated as public for all subsequent stages, meaning that any computation performed on it incurs no additional privacy cost. We can therefore use part of the synthetic data for training-time analysis and reserve another part as validation data for parameter selection.

The key intuition is that this synthetic data does not need to be of sufficiently high quality to support full downstream training. Instead, it only needs to preserve enough coarse task structure to distinguish relatively better parameter subsets from worse ones. Empirically, selection decisions are often more robust than final model accuracy to moderate data imperfections: even if the synthetic data does not faithfully reproduce the full private distribution, it may still preserve the broad layer-wise adaptation patterns needed for selection. This suggests that a relatively small privacy budget may already suffice for generating useful synthetic data for this purpose.

\para{Insight 2: Coarsening the selection granularity improves stability.}
Even with a surrogate dataset, directly estimating importance at the parameter level remains unstable. Fine-grained importance scores are highly sensitive to both data and optimization noise, and small estimation errors can easily change the resulting ranking among millions or billions of candidate parameters.

This motivates reducing the granularity of selection, from \emph{parameter selection} to \emph{layer selection}. Compared with individual parameters, layers are much larger and more semantically coherent units. Their importance is easier to estimate reliably, and their ranking is less sensitive to small fluctuations in the data. Moreover, layer-level preferences are more likely to transfer from synthetic data to the real private data distribution, since higher-level adaptation patterns are typically more stable than fine-grained parameter-wise signals.

In other words, coarser selection sacrifices some flexibility, but gains substantial robustness. This trade-off is favorable: a slightly less expressive but much more reliable selector can be more useful than a highly flexible selector whose estimates are unstable.

\para{Insight 3: Selection should account for noisy updates explicitly.}
Existing selection methods typically evaluate importance under clean optimization, implicitly assuming that the chosen parameters will later receive accurate updates. This assumption no longer holds under DP, where every update is clipped and perturbed by noise. As discussed earlier, this creates an objective mismatch: parameters that appear most beneficial in a clean setting may not be the best ones once the actual training dynamics become noisy.

To address this mismatch, selection should be performed under the same perturbation regime as the eventual DP fine-tuning stage. That is, instead of evaluating candidate layers using clean updates, we inject noise and apply clipping during the selection-time training process and choose the layers that yield the best validation performance under noisy optimization. In this way, the selector explicitly captures not only how much a layer can help learning, but also how robust that layer is to noisy updates.

To further improve transferability, we replace random noise with \emph{worst-case noise} of the same scale during selection. The intuition is that random noise provides only one realization of the perturbation that may be encountered during DP training, whereas worst-case perturbations approximate a more conservative stress test of robustness. If a layer continues to perform well under such adverse perturbations, it is more likely to remain stable across different realizations of DP noise in the actual private fine-tuning stage. This yields a selector that is less tied to a particular random draw and better aligned with the robustness requirements of DP learning.

\begin{figure*}[!t]
\centerline{\includegraphics[width=.9\linewidth]{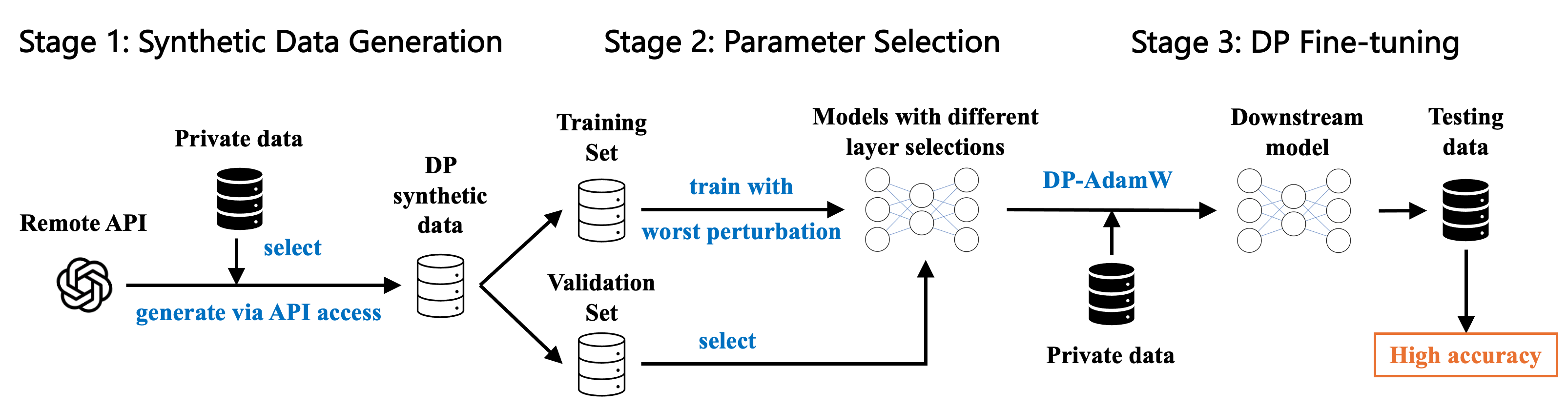}}
\caption{Overview of \ourmethod{}.}
\Description{}
\label{fig:overview}
\end{figure*}

\section{Design of \ourmethod{}}
\label{sec:method}

\subsection{Overview}

In this section, we present the design of \ourmethod{}. \autoref{fig:overview} illustrates the overall pipeline, which consists of three stages. For ease of exposition, we describe the method under a classification task, although the design is more general.

\para{(1) Synthetic data generation.}
As discussed in \autoref{subsec:LLM Fine-tuning}, remote commercial APIs typically provide stronger generation capability than locally deployable open-source models. Inspired by \cite{xie2024differentially}, we first leverage such remote APIs to generate a large pool of \emph{candidate} data samples using only simple, task-relevant prompts abstracted from the private task, such as ``write a movie review''. Although these prompts are derived from a high-level understanding of the private task, they remain sufficiently abstract and contain no dataset-specific or individual-level information. As a result, this stage does not reveal private data and incurs no privacy cost.

Given this unlabeled candidate dataset, we then allocate a small portion of the privacy budget to adapt it to the private task locally. Specifically, we use the private dataset to identify candidate samples that are closer to the target data distribution and to assign them labels in a differentially private manner. This yields a labeled synthetic dataset that better matches the private task, while only consuming a limited privacy budget. Intuitively, the required budget can remain small because this synthetic dataset is used only for selection rather than real model fine-tuning.

\para{(2) Parameter selection.}
We further split the selected labeled synthetic dataset into a training set and a validation set. We then evaluate different layer selections using this synthetic data. For each candidate layer subset, we train a temporary model on the synthetic training split and evaluate its performance on the synthetic validation split. Importantly, this temporary training is performed under the same perturbation regime as the eventual DP fine-tuning stage, so that the selection criterion reflects the actual optimization conditions under privacy constraints.

Moreover, instead of injecting random noise during this stage, we use \emph{worst-case noise} with the same scale. This provides a more conservative estimate of robustness and reduces the risk that the selected layers are overly tuned to a particular random noise realization. The validation performance under such perturbation-aware training is then used as the criterion for layer selection.

\para{(3) DP fine-tuning.}
After identifying the selected layers, we perform the final training on the private dataset using DP-AdamW, updating only the chosen layers to obtain the final model. Since \ourmethod{} determines \emph{where} to apply the updates rather than changing the update parameterization itself, it is naturally compatible with non-selection-based PEFT methods such as LoRA. In this sense, \ourmethod{} can be used either as a standalone layer selection strategy or as a complementary component to further improve existing PEFT-based DP fine-tuning pipelines.

\subsection{Synthetic Data Generation}
\label{subsec:synthetic_generation}

In this section, we provide the design details of the synthetic data generation stage in \ourmethod{}.

We construct the synthetic dataset in two stages. Let $\mathcal{D}_{\mathrm{pri}}=\{(x_i,y_i)\}_{i=1}^{n}$ denote the private dataset. We first generate an unlabeled candidate pool
\begin{equation}
\mathcal{C}=\{\tilde{x}_j\}_{j=1}^{m}
\end{equation}
using a remote commercial API queried with a task-level prompt template $\pi$. Here, $\pi$ captures only the high-level task semantics (e.g., review writing or question answering) and contains no private examples or dataset-specific content. Since $\pi$ is a task abstraction rather than a data-dependent query, this stage is independent of $\mathcal{D}_{\mathrm{pri}}$ and incurs no privacy cost.

In the second stage, we use the private dataset only to \emph{select} and \emph{label} a subset of candidates in a differentially private manner. To do so, we map both private and candidate samples into a local embedding space using a public encoder $\phi(\cdot)$ executed entirely on the defender side.

For each private example $x_i$, we identify its nearest candidate
\begin{equation}
j^*(i)=\arg\min_{j\in[m]} d\!\left(\phi(x_i),\phi(\tilde{x}_j)\right),
\end{equation}
where $d(\cdot,\cdot)$ is a distance metric such as Euclidean distance or cosine distance. Each private sample then casts one vote to its nearest candidate. Aggregating these votes gives a histogram over candidates,
\begin{equation}
h_j=\sum_{i=1}^{n}\mathbf{1}\{j^*(i)=j\}, \qquad j=1,\dots,m.
\end{equation}

Since each private example contributes to exactly one bin, the histogram has sensitivity $1$ at the sample level. We therefore privatize it by adding independent Gaussian noise to each bin:
\begin{equation}
\tilde{h}_j = h_j + z_j, \qquad z_j \sim \mathcal{N}(0,\sigma^2).
\end{equation}
The noisy histogram $\tilde{h}$ is then used to rank candidates, and we retain the top-$k$ samples as the synthetic subset. Intuitively, candidates receiving more votes are closer to the private data distribution, while the added noise ensures that the influence of any single private example is bounded.

After selection, we assign labels to the retained candidates using the private labels in a similarly aggregated manner. Concretely, for each selected candidate $\tilde{x}_j$ and each class $c\in\mathcal{Y}$, we count how many private samples of class $c$ vote for $\tilde{x}_j$:
\begin{equation}
h_{j,c}=\sum_{i=1}^{n}\mathbf{1}\{j^*(i)=j,\; y_i=c\}.
\end{equation}
We then add Gaussian noise to these class-wise counts,
\begin{equation}
\tilde{h}_{j,c}=h_{j,c}+z_{j,c}, \qquad z_{j,c}\sim\mathcal{N}(0,\sigma_{\mathrm{lab}}^2),
\end{equation}
and assign the synthetic label by noisy majority vote,
\begin{equation}
\tilde{y}_j=\arg\max_{c\in\mathcal{Y}} \tilde{h}_{j,c}.
\end{equation}
This produces a labeled synthetic dataset
\begin{equation}
\widetilde{\mathcal{D}}_{\mathrm{syn}}=\{(\tilde{x}_j,\tilde{y}_j)\}_{j\in\mathcal{S}},
\end{equation}
where $\mathcal{S}$ is the index set of selected candidates.

Our key design choice is to spend privacy budget only on this lightweight selection-and-labeling step, rather than on synthetic data generation itself or repeated parameter selection on the private dataset. This keeps the additional privacy cost small while still adapting the candidate pool toward the private task distribution. Importantly, the resulting synthetic data is not required to support high-quality standalone training. Instead, it only needs to preserve enough task-relevant structure for reliable layer selection, which is empirically much more tolerant to moderate data imperfections.

\subsection{Parameter Selection}
\label{subsec:selection}

In this section, we present the design details of the parameter selection stage in \ourmethod{}.

Given the labeled synthetic dataset
\begin{equation}
\widetilde{\mathcal{D}}_{\mathrm{syn}}=\{(\tilde{x}_j,\tilde{y}_j)\}_{j\in\mathcal{S}},
\end{equation}
we split it into a synthetic training set and a synthetic validation set,
\begin{equation}
\widetilde{\mathcal{D}}_{\mathrm{syn}}^{\mathrm{tr}}
\cup
\widetilde{\mathcal{D}}_{\mathrm{syn}}^{\mathrm{val}}
=
\widetilde{\mathcal{D}}_{\mathrm{syn}},
\qquad
\widetilde{\mathcal{D}}_{\mathrm{syn}}^{\mathrm{tr}}
\cap
\widetilde{\mathcal{D}}_{\mathrm{syn}}^{\mathrm{val}}
=\emptyset.
\end{equation}

Let the pre-trained model be parameterized by
\begin{equation}
\theta^{(0)}=\{\theta^{(0)}_1,\dots,\theta^{(0)}_L\},
\end{equation}
where $\theta^{(0)}_\ell$ denotes the parameters of the $\ell$-th layer and $L$ is the total number of layers. Our goal is to select a subset of layers
\begin{equation}
\Lambda \subseteq [L]
\end{equation}
for subsequent DP fine-tuning on the private dataset.

For each candidate layer subset $\Lambda$, we construct a temporary model by allowing only the layers in $\Lambda$ to be updated while freezing all remaining layers. Formally, the trainable parameter set is
\begin{equation}
\theta_\Lambda=\{\theta_\ell:\ell\in\Lambda\},
\end{equation}
while $\theta_\ell=\theta_\ell^{(0)}$ remains fixed for all $\ell\notin\Lambda$.

To evaluate a candidate subset $\Lambda$, we perform temporary training on $\widetilde{\mathcal{D}}_{\mathrm{syn}}^{\mathrm{tr}}$ under a perturbation regime matched to the downstream DP fine-tuning stage. Let
\begin{equation}
\mathcal{L}_{\mathrm{syn}}(\theta;\widetilde{\mathcal{D}}_{\mathrm{syn}}^{\mathrm{tr}})
=
\frac{1}{|\widetilde{\mathcal{D}}_{\mathrm{syn}}^{\mathrm{tr}}|}
\sum_{(\tilde{x},\tilde{y})\in \widetilde{\mathcal{D}}_{\mathrm{syn}}^{\mathrm{tr}}}
\ell(\theta;\tilde{x},\tilde{y})
\end{equation}
denote the synthetic training loss. For a given $\Lambda$, we first compute the gradient with respect to the selected layers,
\begin{equation}
g_\Lambda
=
\nabla_{\theta_\Lambda}
\mathcal{L}_{\mathrm{syn}}(\theta;\widetilde{\mathcal{D}}_{\mathrm{syn}}^{\mathrm{tr}}).
\end{equation}

Instead of applying the clean update direction $g_\Lambda$, we inject perturbation during this temporary training process to mimic the noisy optimization conditions in DP fine-tuning. Specifically, we use
\begin{equation}
\widehat{g}_\Lambda = g_\Lambda + \xi_\Lambda,
\end{equation}
where $\xi_\Lambda$ is a perturbation vector whose norm scale matches that of the downstream DP noise.

Rather than sampling $\xi_\Lambda$ as random Gaussian noise, we use a worst-case perturbation under the same norm budget. Concretely, we define
\begin{equation}
\xi_\Lambda^\star
\in
\arg\max_{\|\xi\|_2 \leq \rho_\Lambda}
\mathcal{L}_{\mathrm{syn}}
\bigl(\theta-\eta(g_\Lambda+\xi);\widetilde{\mathcal{D}}_{\mathrm{syn}}^{\mathrm{tr}}\bigr),
\end{equation}
where $\eta$ is the learning rate and $\rho_\Lambda$ is chosen to match the effective perturbation scale induced by the downstream DP mechanism. The temporary update is then given by
\begin{equation}
\theta_\Lambda'
=
\theta_\Lambda - \eta \bigl(g_\Lambda + \xi_\Lambda^\star\bigr).
\end{equation}

This worst-case perturbation yields a conservative estimate of the robustness of layer subset $\Lambda$: if a candidate subset continues to perform well even under the most adverse perturbation of the prescribed scale, it is more likely to remain effective under the stochastic noise encountered in the actual DP fine-tuning stage.

After temporary training, we evaluate the resulting model on the synthetic validation set. Let
\begin{equation}
\mathrm{Perf}(\Lambda)
=
\mathrm{Acc}\bigl(\theta'(\Lambda);\widetilde{\mathcal{D}}_{\mathrm{syn}}^{\mathrm{val}}\bigr)
\end{equation}
denote the validation performance of the temporarily updated model associated with $\Lambda$, where $\mathrm{Acc}(\cdot)$ can be replaced by any task-appropriate evaluation metric. We then select the final layer subset as
\begin{equation}
\Lambda^\star
=
\arg\max_{\Lambda \in \mathcal{Q}}
\mathrm{Perf}(\Lambda),
\end{equation}
where $\mathcal{Q}$ is the candidate family of layer subsets considered by the selector.
The selected subset $\Lambda^\star$ is subsequently used in the final DP fine-tuning stage on the private dataset. By performing selection on synthetic data and evaluating candidate subsets under perturbation-aware training, this stage avoids additional privacy cost while producing a selector that is better aligned with the noisy optimization dynamics of DP learning.

\para{Overall algorithm.}
We summarize the whole pipeline in \autoref{alg:DP-SFT} in \autoref{subsec:algorithm}.

\subsection{Theoretical Analyses}
\label{subsec:analysis}

In this section, we provide a theoretical analysis of \ourmethod{}. 
Our goal is to formalize why selective fine-tuning is beneficial under DP noise, and why the proposed worst-case perturbation criterion improves the robustness of layer selection. 
We state the main results informally here and defer the formal theorems and proofs to \autoref{app:theory}.

\para{Notation.}
Let $\Lambda \subseteq [L]$ denote a selected layer subset and let $d_{\Lambda}$ be the number of trainable parameters in $\Lambda$. 
We write $\theta_{\Lambda}$ for the trainable parameters and $\theta_{\bar{\Lambda}}$ for the frozen parameters. 
Let $F_{\mathrm{pri}}(\theta)$ be the empirical risk on the private dataset and $F_{\mathrm{syn}}(\theta)$ be the empirical risk on the synthetic training split. 
For a perturbation $\xi$ applied to the trainable parameters, we denote by
\begin{equation}
\mathcal{R}_{\mathrm{syn}}(\Lambda,\xi)
\end{equation}
the validation risk on the synthetic validation split after training only the layers in $\Lambda$ under perturbation $\xi$. 
Similarly, $\mathcal{R}_{\mathrm{pri}}(\Lambda,\xi)$ denotes the corresponding risk on the private task distribution.

We make the following standard assumptions for the analysis.

\noindent\textbf{Assumption 1 (Smoothness).}
For any layer subset $\Lambda$, the loss is $\beta$-smooth with respect to $\theta_{\Lambda}$:
\begin{equation}
F(\theta_{\Lambda}+\Delta)
\leq
F(\theta_{\Lambda})
+
\left\langle
\nabla_{\theta_{\Lambda}}F(\theta_{\Lambda}),\Delta
\right\rangle
+
\frac{\beta}{2}\|\Delta\|_2^2.
\end{equation}

\noindent\textbf{Assumption 2 (Bounded clipped gradients).}
After per-example clipping, the gradient used in DP fine-tuning satisfies
\begin{equation}
\|g_{\Lambda}\|_2 \leq C.
\end{equation}
The injected DP noise on the selected trainable subspace follows
\begin{equation}
z_{\Lambda}\sim \mathcal{N}(0,\sigma^2 C^2 I_{d_{\Lambda}}).
\end{equation}

\noindent\textbf{Assumption 3 (Synthetic-to-private transfer).}
For all candidate layer subsets $\Lambda\in\mathcal{Q}$ and all perturbations $\|\xi\|_2\leq \rho$, the synthetic validation risk uniformly approximates the private-task risk up to error $\tau$:
\begin{equation}
\left|
\mathcal{R}_{\mathrm{syn}}(\Lambda,\xi)
-
\mathcal{R}_{\mathrm{pri}}(\Lambda,\xi)
\right|
\leq
\tau.
\end{equation}

\noindent\textbf{Assumption 4 (Bounded validation loss).}
The validation loss is bounded by $B$:
\begin{equation}
0\leq \mathcal{R}_{\mathrm{pri}}(\Lambda,\xi)\leq B.
\end{equation}

\para{Privacy guarantee.}
The end-to-end privacy guarantee follows from composition and post-processing. 
The API-based candidate generation stage does not access $\mathcal{D}_{\mathrm{pri}}$ and therefore incurs no privacy cost. 
The DP synthetic-data construction stage satisfies $(\varepsilon_{\mathrm{syn}},\delta_{\mathrm{syn}})$-DP, and the final DP fine-tuning stage satisfies $(\varepsilon_{\mathrm{ft}},\delta_{\mathrm{ft}})$-DP. 
Since layer selection is performed only on the synthetic dataset, it is post-processing of a DP output and incurs no additional privacy cost. 
Thus, the full pipeline satisfies
\begin{equation}
(\varepsilon,\delta)
=
(
\varepsilon_{\mathrm{syn}}+\varepsilon_{\mathrm{ft}},
\delta_{\mathrm{syn}}+\delta_{\mathrm{ft}}
).
\end{equation}

\para{Theorem 1 (Informal: selective fine-tuning trades learning signal for reduced DP noise damage).}
Similar to smoothness-based analyses of private ERM~\cite{zhang2017efficient}, we decompose the effect of a noisy private update into a descent term and a perturbation-induced error term. In our setting, this decomposition reveals a DP-specific trade-off: restricting updates to a selected layer subset $\Lambda$ reduces the DP noise damage through the trainable dimension $d_\Lambda$, but it also reduces the available learning signal to the gradient projected onto $\Lambda$.

Under Assumptions 1 and 2, consider one DP update restricted to a layer subset $\Lambda$:
\begin{equation}
\theta_{\Lambda}^{+}
=
\theta_{\Lambda}
-
\eta(g_{\Lambda}+z_{\Lambda}),
\qquad
z_{\Lambda}\sim\mathcal{N}(0,\sigma^2 C^2 I_{d_\Lambda}).
\end{equation}
Then the expected one-step risk satisfies
\begin{equation}
\mathbb{E}
\left[
F_{\mathrm{pri}}(\theta^{+})
\right]
\leq
F_{\mathrm{pri}}(\theta)
-
\eta
\left\langle
\nabla_{\theta_{\Lambda}}F_{\mathrm{pri}}(\theta),
g_{\Lambda}
\right\rangle
+
\frac{\beta\eta^2}{2}\|g_{\Lambda}\|_2^2
+
\frac{\beta\eta^2}{2}d_{\Lambda}\sigma^2 C^2.
\end{equation}

Here, the first-order term
\begin{equation}
\left\langle
\nabla_{\theta_{\Lambda}}F_{\mathrm{pri}}(\theta),
g_{\Lambda}
\right\rangle
\end{equation}
captures the learning signal retained by the selected layers. If $\Lambda$ is too small or poorly chosen, this term may become weak because the update only follows the gradient projected onto the selected subspace. In contrast, the last term
\begin{equation}
d_{\Lambda}\sigma^2 C^2
\end{equation}
captures the DP noise damage, which decreases with the number of trainable parameters. Therefore, selective fine-tuning is beneficial only when the selected subset preserves sufficient learning signal while substantially reducing noise-induced damage.

\para{Theorem 2 (Informal: worst-case selection generalizes to unseen DP noise).}
Let the worst-case selection objective be
\begin{equation}
W_{\mathrm{syn}}(\Lambda)
=
\sup_{\|\xi\|_2\leq \rho}
\mathcal{R}_{\mathrm{syn}}(\Lambda,\xi),
\end{equation}
and let \ourmethod{} select
\begin{equation}
\Lambda^{\star}
=
\arg\min_{\Lambda\in\mathcal{Q}}
W_{\mathrm{syn}}(\Lambda).
\end{equation}
Assume the downstream DP perturbation $Z$ satisfies
\begin{equation}
\Pr(\|Z\|_2\leq \rho)\geq 1-\alpha.
\end{equation}
Then, under Assumptions 3 and 4, the private-task risk of the selected subset under unseen DP noise is bounded by
\begin{equation}
\mathbb{E}_{Z}
\left[
\mathcal{R}_{\mathrm{pri}}(\Lambda^{\star},Z)
\right]
\leq
\min_{\Lambda\in\mathcal{Q}}
\sup_{\|\xi\|_2\leq \rho}
\mathcal{R}_{\mathrm{pri}}(\Lambda,\xi)
+
2\tau
+
\alpha B.
\end{equation}

This theorem explains why worst-case perturbation is useful for DP layer selection. 
Random perturbation evaluates a candidate layer subset under one sampled noise realization, which may be benign and may not represent the perturbations encountered during final DP fine-tuning. 
In contrast, worst-case perturbation minimizes an upper bound over all perturbations within the target noise scale. 
Thus, if a layer subset performs well under the worst admissible perturbation, it is guaranteed to perform well under any random DP perturbation within the same radius, up to the synthetic-to-private transfer error and the tail probability of the DP noise.

\para{Implications.}
Together, the two theorems justify the design of \ourmethod{}. 
Theorem~1 shows that selective fine-tuning has an intrinsic advantage under DP because the noise damage grows with the trainable dimension. 
Theorem~2 shows that worst-case perturbation provides a robust selection criterion that better generalizes to unseen DP noise than a criterion based on one random perturbation. 
Therefore, \ourmethod{} improves the privacy--utility trade-off by jointly controlling where noisy updates are applied and whether the selected layers remain stable under DP perturbations.

\begin{table*}[!t]
\caption{Comparison of the performance of full DP fine-tuning, DP-LoRA, heuristic layer selection, and \ourmethod{} across various downstream tasks under different privacy budgets.}
\begin{tabular}{c|c||ccccc|ccccc}
\hline 
\multirow{2}{*}{\begin{tabular}{c}
Privacy\\
Budget
\end{tabular}} 
& \multirow{2}{*}{Method} & \multicolumn{5}{c|}{RoBERTa-Large} & \multicolumn{5}{c}{OPT-350M} \\
\cline{3-12}
 &  & MNLI & QQP & SST-2 & SST-5 & Average & MNLI & QQP & SST-2 & SST-5 & Average \\
\hline \hline
\multirow{5}{*}{$\varepsilon=1$} 
& Full Parameter        &76.30  &76.30  &93.01  &46.30  &72.98  &\textbf{40.80}  &63.80  &87.73  &47.30  &59.91  \\
& LoRA~\cite{yu2021differentially}                  &74.60  &76.30  &92.78  &41.30  &71.25  &34.50  &61.60  &84.86  &43.80  &56.19  \\
& Heuristic-based~\cite{liu2023differentially}       &76.10  &78.20  &93.46  &46.30  &73.52  &36.80  &61.20  &88.18  &47.30  &58.37  \\ \cline{2-12}
& \textbf{\ourmethod{}}   &\textbf{77.00}  &\textbf{78.60}  &\textbf{94.15}  &\textbf{48.30}  &\textbf{74.51}  &40.20  &\textbf{64.50}  &\textbf{88.76}  &\textbf{47.80}  &\textbf{60.32}  \\
& \textbf{\ourmethod{}+LoRA}   &74.50  &76.80  &93.00  &41.40  &71.43  &35.60  &61.70  &85.32  &43.10  &56.43  \\
\hline
\hline
\multirow{5}{*}{$\varepsilon=5$} 
& Full Parameter        &77.70  &77.20  &94.04  &50.40  &74.84  &47.50  &74.40  &89.45  &48.60  &64.99  \\
& LoRA~\cite{yu2021differentially}                  &79.80  &77.70  &93.00  &49.00  &74.88  &36.10  &63.10  &88.07  &46.20  &58.37  \\
& Heuristic-based~\cite{liu2023differentially}       &77.60  &78.50  &93.92  &49.30  &74.83  &42.70  &71.60  &89.79  &49.00  &63.27  \\ \cline{2-12}
& \textbf{\ourmethod{}}   &79.70  &79.10  &\textbf{94.15}  &\textbf{51.40}  &76.09  &\textbf{54.40}  &\textbf{74.60}  &\textbf{90.37}  &\textbf{50.00}  &\textbf{67.34}  \\
& \textbf{\ourmethod{}+LoRA}   &\textbf{80.50}  &\textbf{79.50}  &93.46  &51.20  &\textbf{76.17}  &38.80  &64.30  &88.53  &46.40  &59.51  \\
\hline
\hline
\multirow{5}{*}{$\varepsilon=10$} 
& Full Parameter        &78.20  &77.30  &94.15  &50.60  &75.06  &55.50  &74.80  &89.91  &48.80  &67.25  \\
& LoRA~\cite{yu2021differentially}                  &81.20  &78.70  &93.12  &50.90  &75.98  &38.70  &64.80  &88.88  &47.40  &59.95  \\
& Heuristic-based~\cite{liu2023differentially}       &77.90  &78.80  &93.69   &47.90  &74.57  &46.50  &74.50  &90.37  &48.60  &64.99  \\ \cline{2-12}
& \textbf{\ourmethod{}}   &80.30  &79.40  &\textbf{94.38}  &\textbf{51.60}  &76.42  &\textbf{59.00}  &\textbf{76.30}  &\textbf{90.48}  &\textbf{50.80}  &\textbf{69.15}  \\
& \textbf{\ourmethod{}+LoRA}   &\textbf{82.20}  &\textbf{79.60}  &93.81  &51.20  &\textbf{76.70}  &41.40  &66.80  &89.11  &48.20  &61.38  \\
\hline
\end{tabular}
\label{tab:main}
\end{table*}

\section{Evaluation}
\label{sec:eval}

\subsection{Experimental Setup}
\label{subsec:setup}

\para{Models and APIs.}
We evaluate \ourmethod{} on four widely used open-source foundation models from the OPT~\cite{zhang2022opt} and RoBERTa~\cite{liu2019roberta} families. These models are representative backbones for local NLP fine-tuning and are commonly used in both private and non-private adaptation studies. For synthetic-data generation, we use commercial LLM APIs only with high-level, non-sensitive prompts; the private data never leaves the local environment. More details are provided in \autoref{app:setup}.

\para{Datasets.}
We evaluate on four benchmark datasets from GLUE~\cite{GLUE} and related standard NLP benchmarks, covering sentiment analysis, natural language inference, and paraphrase detection. Specifically, we use SST-2 and SST-5~\cite{socher2013recursive}, MNLI~\cite{williams2018broad}, and QQP~\cite{cer2017semeval}. These datasets cover diverse language understanding tasks and are standard benchmarks for evaluating fine-tuning quality.

\para{Privacy settings.}
We consider three end-to-end privacy budgets, $\varepsilon \in \{1,5,10\}$, representing strict, moderate, and relatively relaxed privacy regimes. Following common practice, we set the failure probability to $\delta=10^{-5}$ for all experiments. Unless otherwise specified, we adopt a unified setting across all experiments, where the clipping threshold is set to $C=1$, the batch size is $64$, the number of training steps is $1000$, and the training set size is fixed to $1024$ records. In addition, we set the number of non-DP pre-fine-tuning steps to $300$ by default.

\para{Baselines.}
We compare \ourmethod{} against three representative baselines: full DP fine-tuning, which updates all parameters; DP-LoRA~\cite{hu2022lora}, a representative non-selection-based PEFT method; and heuristic layer selection~\cite{liu2023differentially}, a manually designed strategy that updates only attention layers. These baselines allow us to compare against full-parameter training, structured PEFT, and hand-crafted selective tuning rules.

\para{Configuration of \ourmethod{}.}
We evaluate two variants: \ourmethod{}, which directly updates the selected layers, and \ourmethod{}+LoRA, which applies LoRA only to the selected layers. By default, we allocate $\varepsilon_{\mathrm{syn}}=0.3$ to synthetic-data construction and use the remaining privacy budget for final DP fine-tuning. We select the top-$k$ layers according to the perturbation-aware validation performance on synthetic data, with $k=15$ by default. Additional implementation details, including prompt templates, API choices, synthetic-data splits, and layer-search details, are deferred to \autoref{app:setup}.

\subsection{Main Results}

\autoref{tab:main} reports the downstream accuracy under different models, tasks, and privacy budgets. Overall, \ourmethod{} achieves the best average accuracy in all six model-budget settings, demonstrating consistent improvement over full DP fine-tuning, DP-LoRA, and heuristic layer selection. When $\varepsilon=1$, \ourmethod{} improves the average accuracy over the strongest baseline from $73.52\%$ to $74.51\%$ on RoBERTa-Large, yielding a gain of $0.99$ percentage points. Under larger privacy budgets, the gains become clearer: at $\varepsilon=5$, \ourmethod{} improves the OPT-350M average accuracy from $64.99\%$ to $67.34\%$, a gain of $2.35$ percentage points; at $\varepsilon=10$, it improves the accuracy from $67.25\%$ to $69.15\%$, a gain of $1.90$ percentage points.

The only outlier is the $\varepsilon=1$ setting with OPT-350M, where Full Parameter fine-tuning slightly outperforms \ourmethod{} on MNLI ($40.80\%$ vs. $40.20\%$). This may be because MNLI requires broad cross-sentence reasoning, and under a very small model and strict privacy budget, restricting updates to a subset of layers can slightly limit the model's adaptation capacity. Nevertheless, \ourmethod{} still achieves the best average accuracy in this setting, indicating that the selected layers provide a better overall privacy--utility trade-off.

The results also show that \ourmethod{} is complementary to LoRA in some settings, but the benefit is model-dependent. On RoBERTa-Large, \ourmethod{}+LoRA achieves the best average accuracy under $\varepsilon=5$ and $\varepsilon=10$, suggesting that selecting where LoRA modules are applied can further improve non-selection-based PEFT. On OPT-350M, however, \ourmethod{} without LoRA consistently performs better than \ourmethod{}+LoRA. This indicates that, for some backbones, directly updating carefully selected layers is more effective than introducing low-rank adapters under DP noise.

We further observe that the improvement of \ourmethod{} is especially pronounced on challenging tasks. Compared with SST-2, SST-5 requires fine-grained sentiment classification and is more sensitive to noisy updates. This difference is reflected in the gains. Under $\varepsilon=1$ on RoBERTa-Large, \ourmethod{} improves SST-2 accuracy from $93.46\%$ to $94.15\%$, a gain of $0.69$ percentage points, while improving SST-5 accuracy from $46.30\%$ to $48.30\%$, a gain of $2.00$ percentage points. Similarly, under $\varepsilon=10$ on OPT-350M, \ourmethod{} improves SST-2 accuracy from $90.37\%$ to $90.48\%$, a gain of $0.11$ percentage points, while improving SST-5 accuracy from $48.80\%$ to $50.80\%$, a gain of $2.00$ percentage points. This is consistent with our motivation: fine-grained tasks are more vulnerable to DP-induced perturbations, and selecting layers that are both learnable and robust becomes particularly beneficial.

\begin{table}[!t]
\caption{Ablation Study on MNLI~\cite{williams2018broad} Dataset.}
\resizebox{.9\linewidth}{!}{
\begin{tabular}{l||ccc}
\hline 
Method & \multicolumn{1}{c}{$\varepsilon=1$} & \multicolumn{1}{c}{$\varepsilon=5$} & \multicolumn{1}{c}{$\varepsilon=10$} \\
\hline \hline
Full Parameter &76.30  &77.70  &78.20  \\
LoRA~\cite{yu2021differentially} &74.60  &79.80  &81.20  \\
Heuristic-based~\cite{liu2023differentially} &76.10  &77.60  &77.90  \\
Gradient-based~\cite{zhang2024gradient} &74.00  &77.60  &78.10  \\
Loss-based~\cite{he2023sensitivity} &74.00  &78.40  &79.20  \\
Performance-based (without noise) &67.00  &73.70  &75.20  \\
Performance-based (with noise) &67.10  &74.50  &77.50  \\ \hline \hline
\textbf{\ourmethod{}} &\textbf{77.00}  &79.70  &80.30 \\
\textbf{\ourmethod{}+LoRA} &74.50  &\textbf{80.50}  &\textbf{82.20} \\
\hline
\end{tabular}
}
\label{tab:ablation}
\end{table}

\subsection{Ablation Study}

In this section, we conduct ablation studies to evaluate the necessity of each design component in \ourmethod{} and to understand whether existing non-DP selection criteria can be directly adapted to DP fine-tuning. We fix the foundation model to RoBERTa-Large and report results on MNLI in \autoref{tab:ablation}.

We consider four additional selection baselines. 
(1) \emph{Gradient-based}~\cite{zhang2024gradient}, which ranks layers using gradient-based importance scores computed on the synthetic data. 
(2) \emph{Loss-based}~\cite{he2023sensitivity}, which ranks layers according to validation loss on the synthetic data. 
(3) \emph{Performance-based (without noise)}, which selects layers by validation accuracy after clean temporary training. 
(4) \emph{Performance-based (with noise)}, which adds clipping and random noise during temporary training and then selects layers by validation accuracy.

\begin{figure}[t]
\centering

\begin{subfigure}[t]{0.49\linewidth}
    \centering
    \includegraphics[width=\linewidth]{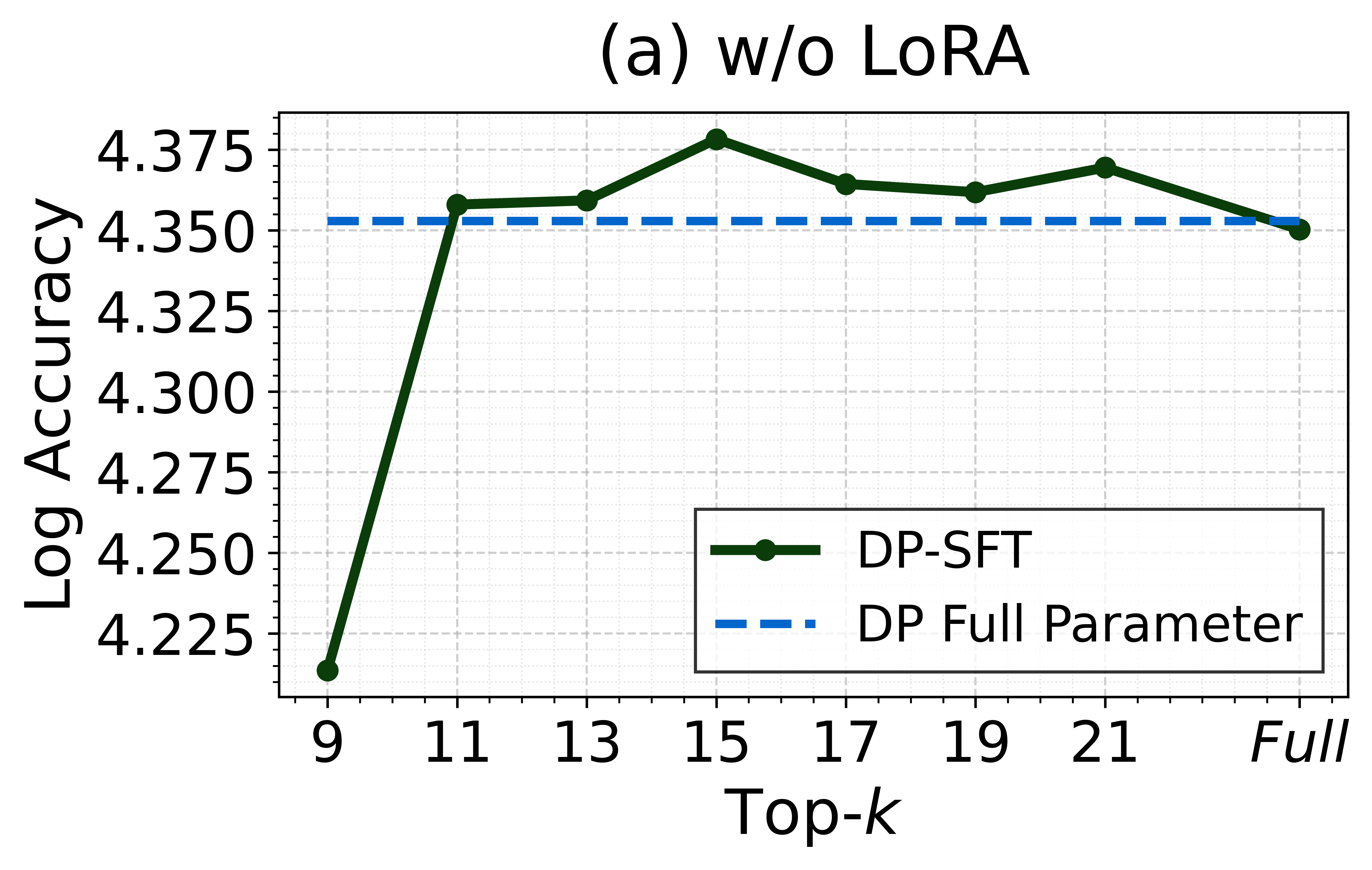}
\end{subfigure}
\hfill
\begin{subfigure}[t]{0.49\linewidth}
    \centering
    \includegraphics[width=\linewidth]{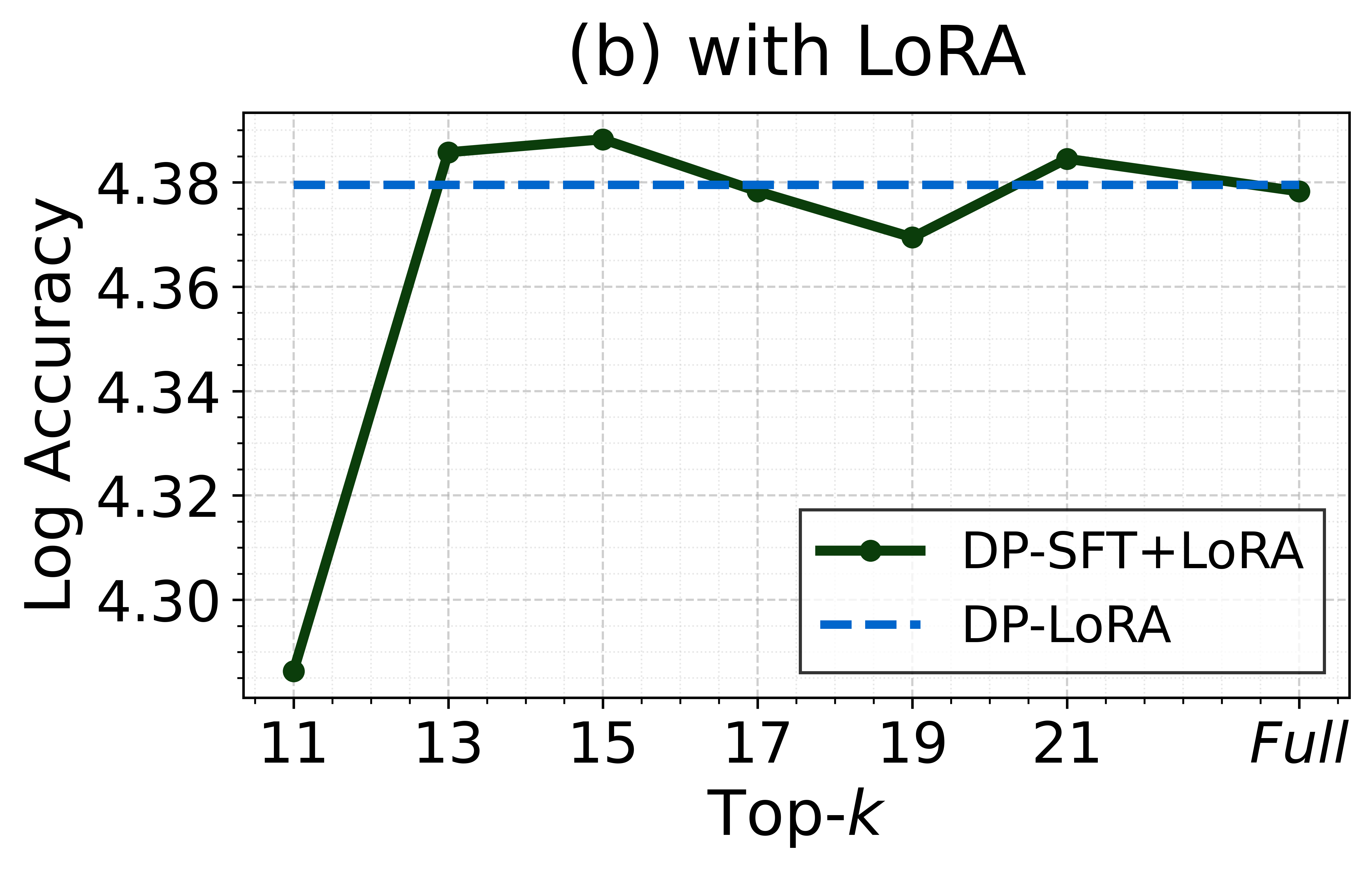}
\end{subfigure}

\caption{Impact of Top-$k$ on MNLI under $\varepsilon=5$ and RoBERTa-Large. Left: comparison between DP-SFT and DP Full Parameter. Right: comparison between DP-SFT+LoRA and DP-LoRA.}
\Description{}
\vspace{-5pt}
\label{fig:ablation_top-k}
\end{figure}

\autoref{tab:ablation} shows the results. First, directly adapting gradient- or loss-based selection~\cite{zhang2024gradient, he2023sensitivity} does not consistently outperform standard baselines. For example, under $\varepsilon=1$, both Gradient-based and Loss-based selection achieve $74.00\%$, below Full Parameter fine-tuning ($76.30\%$) and Heuristic-based selection ($76.10\%$). Under larger budgets, Loss-based selection improves, reaching $79.20\%$ at $\varepsilon=10$, but it still underperforms \ourmethod{}+LoRA ($82.20\%$). This suggests that gradient and loss signals alone are insufficient for reliable DP layer selection, since they do not directly capture how selected layers behave under noisy private optimization. In particular, loss-based selection is less effective than performance-based selection because validation loss and validation accuracy respond differently to DP-induced perturbations. 
Loss is sensitive to confidence calibration and probability margins over all classes, even when the predicted label does not change. 
As a result, it may favor layer subsets that improve probability estimates but are not necessarily stable under clipped and noisy updates. 
In contrast, accuracy directly measures whether the selected layers preserve correct task decisions after perturbation. 
Since the downstream objective is classification accuracy, performance-based selection provides a more task-aligned criterion for identifying layers that remain useful and robust under DP fine-tuning.

Second, performance-based selection without noise performs poorly, achieving only $67.00\%$, $73.70\%$, and $75.20\%$ under $\varepsilon=1,5,10$, respectively. Adding clipping and random noise improves the results to $67.10\%$, $74.50\%$, and $77.50\%$. This supports our claim that selection should reflect the perturbation regime of downstream DP fine-tuning rather than clean optimization. However, random-noise selection is still much worse than \ourmethod{}, indicating that a random perturbation can provide an unreliable estimate of robustness.

Finally, \ourmethod{} achieves the best result under the strictest privacy budget, improving over the strongest baseline from $76.30\%$ to $77.00\%$ at $\varepsilon=1$. When combined with LoRA, \ourmethod{}+LoRA achieves the best performance under larger budgets, reaching $80.50\%$ at $\varepsilon=5$ and $82.20\%$ at $\varepsilon=10$. These results validate the central design of \ourmethod{}: layer selection should be based not only on learnability, but also on robustness to adverse perturbations induced by DP training.

\subsection{Impact of Top-k}

In this section, we study the sensitivity of \ourmethod{} to the number of selected top-$k$ layers. We vary $k$ from a small subset to $\textit{Full}$, where $\textit{Full}$ means that all layers are updated. We consider two settings: \ourmethod{} without LoRA and \ourmethod{}+LoRA. We fix the foundation model to RoBERTa and the task to MNLI.

\autoref{fig:ablation_top-k} shows the downstream accuracy under different choices of $k$. For readability, we start the curves from $k=9$ without LoRA and $k=11$ with LoRA, since smaller values lead to extremely low accuracy. In both settings, \ourmethod{} achieves the best performance at an intermediate value, with the peak at $k=15$. Without LoRA, performance increases sharply from $k=9$ to $k=11$, reaches the best result at $k=15$, and then gradually declines as more layers are selected. With LoRA, the same pattern is observed: very small $k$ underperforms, while $k=13$ to $k=21$ remains competitive, with the best result again around $k=15$.

This trend reflects a natural trade-off in selective DP fine-tuning. When $k$ is too small, the trainable subspace is overly restricted, limiting the model's adaptation capacity and reducing the task-specific learning signal. When $k$ is too large, more layers are exposed to clipped and noisy DP updates, increasing the risk of damaging useful pre-trained representations and weakening the benefit of selection. The best performance is therefore achieved at an intermediate $k$, where the model retains sufficient adaptation flexibility while still reducing noise-induced damage.

Overall, these results show that \ourmethod{} is stable across a reasonable range of $k$ values and support our motivation that effective DP fine-tuning requires balancing learnability and robustness.

\subsection{Impact of Privacy Budget Allocation}

In this section, we study the sensitivity of \ourmethod{} to the privacy budget allocated to the synthetic-data construction stage under two settings: \ourmethod{} without LoRA and \ourmethod{}+LoRA. We fix the total privacy budget to $\varepsilon=5$, the foundation model to RoBERTa, and evaluate on MNLI and QQP. We vary the synthetic-data budget as $\varepsilon_{\mathrm{syn}} \in \{0.1, 0.3, 0.5, 1.0\}$, with the remaining budget allocated to the final DP fine-tuning stage.

\begin{table}[t]
\caption{Model utility under different privacy allocation.}
\centering
\resizebox{.95\linewidth}{!}{
\renewcommand{\arraystretch}{1.1}
\setlength{\tabcolsep}{12pt}
\begin{tabular}{c|c||cccc}
\hline
\multirow{2}{*}{Method} &
\multirow{2}{*}{Dataset} &
\multicolumn{4}{c}{$\varepsilon_{\mathrm{syn}}$} \\
\cline{3-6}
& &0.1 &0.3 &0.5 &1.0  \\
\hline
\hline
\multirow{2}{*}{\textbf{\ourmethod{}}}
& MNLI &79.00 &79.70 &\textbf{79.90} &78.80  \\
& QQP &78.30  &\textbf{79.10} &78.40 &78.60\ \\
\hline
\hline
\multirow{2}{*}{\textbf{\ourmethod{}+LoRA}}
& MNLI &79.50 &\textbf{80.50} &79.80 &80.10 \\
& QQP &79.00  &\textbf{79.50} &\textbf{79.50} &79.40 \ \\
\hline
\end{tabular}
}
\label{tab:ablation_budget}
\end{table}

\autoref{tab:ablation_budget} reports the results. Overall, \ourmethod{} is reasonably stable across different allocations, with the best performance typically achieved at an intermediate $\varepsilon_{\mathrm{syn}}$. For \ourmethod{}, the best result on MNLI is obtained at $\varepsilon_{\mathrm{syn}}=0.5$ ($79.90\%$), while the best result on QQP is obtained at $\varepsilon_{\mathrm{syn}}=0.3$ ($79.10\%$). For \ourmethod{}+LoRA, $\varepsilon_{\mathrm{syn}}=0.3$ gives the best MNLI result ($80.50\%$), and $\varepsilon_{\mathrm{syn}}\in\{0.3,0.5\}$ gives the best QQP result ($79.50\%$).
These results reflect a natural trade-off. When $\varepsilon_{\mathrm{syn}}$ is too small, the synthetic data may be less informative, weakening the quality of layer selection. For example, increasing $\varepsilon_{\mathrm{syn}}$ from $0.1$ to $0.3$ improves \ourmethod{}+LoRA from $79.50\%$ to $80.50\%$ on MNLI. On the other hand, allocating too much privacy budget to synthetic-data construction leaves less budget for final DP fine-tuning, which can hurt downstream utility. For example, \ourmethod{} on MNLI drops from $79.90\%$ at $\varepsilon_{\mathrm{syn}}=0.5$ to $78.80\%$ at $\varepsilon_{\mathrm{syn}}=1.0$.

Overall, \ourmethod{} is not overly sensitive to the exact allocation, but an intermediate $\varepsilon_{\mathrm{syn}}$ generally provides the best balance between reliable selection and effective final DP fine-tuning. We use $\varepsilon_{\mathrm{syn}}=0.3$ by default, as it performs strongly across settings and leaves sufficient privacy budget for strict regimes such as total $\varepsilon=1$.

\begin{table}[!t]
\caption{Impact of model size on MNLI dataset under $\varepsilon=5$.}
\resizebox{\linewidth}{!}{
\begin{tabular}{l||ccc}
\hline
Method & DistilRoberta & Roberta-Base & Roberta-Large \\
\hline \hline
Full Parameter &57.50  &71.00  &77.70  \\
LoRA~\cite{yu2021differentially} &53.70  &71.20  &79.80  \\
Heuristic-based~\cite{liu2023differentially} &59.20  &70.70  &77.60  \\ \hline \hline
\textbf{\ourmethod{}} &\textbf{59.80}  &\textbf{72.00}  &79.70  \\
\textbf{\ourmethod{}+LoRA} &55.90  &71.20  &\textbf{80.50}  \\
\hline
\end{tabular}
}
\label{tab:size}
\end{table}

\subsection{Impact of Model Size}

In this section, we investigate the impact of model size on \ourmethod{}. Specifically, we consider three RoBERTa-family models with different scales: DistilRoBERTa~\cite{sanh2019distilbert}, RoBERTa-Base~\cite{liu2019roberta}, and RoBERTa-Large~\cite{liu2019roberta}. We fix $\varepsilon=5$ and evaluate on MNLI.

\autoref{tab:size} reports the results. Overall, \ourmethod{} achieves the best performance on the smaller and medium-sized models, improving over the strongest baseline from $59.20\%$ to $59.80\%$ on DistilRoBERTa, a gain of $0.60$ percentage points, and from $71.20\%$ to $72.00\%$ on RoBERTa-Base, a gain of $0.80$ percentage points. On RoBERTa-Large, \ourmethod{}+LoRA achieves the best accuracy, improving over the strongest baseline from $79.80\%$ to $80.50\%$, a gain of $0.70$ percentage points. These results suggest that selective fine-tuning remains effective across model scales, but the best way to apply selection can depend on the backbone size.
For smaller models, directly updating the selected layers performs best. This is likely because smaller models have limited adaptation capacity, and adding LoRA may further constrain the update space. In contrast, for RoBERTa-Large, combining \ourmethod{} with LoRA performs best, suggesting that larger models can benefit from both selecting where updates are applied and constraining the form of those updates. This is consistent with our motivation: as the model becomes larger, controlling noisy DP updates becomes more important, and combining layer selection with structured PEFT can better preserve useful pre-trained representations while still enabling task adaptation.

Overall, these results show that \ourmethod{} is robust across different model sizes and can be used either as a standalone strategy or together with LoRA, depending on the model scale.

\begin{table}[!t]
\caption{Impact of remote API under $\varepsilon=5$.}
\resizebox{\linewidth}{!}{
\begin{tabular}{lc||ccc}
\hline
Fine-tuning & Pre-fine-tuning & Claude Haiku & GPT-4o-mini & GPT-4o \\
\hline\hline
\multirow{2}{*}{Full Parameter} & None &77.70  &77.70  &77.70  \\
 & Full Parameter &79.90  &77.80  &78.70  \\
\hline
\multirow{3}{*}{LoRA~\cite{yu2021differentially}} & None &79.80  &79.80  &79.80  \\
 & Full Parameter &81.80  &80.50  &81.90  \\
 & LoRA &81.30 &80.10 &80.40  \\
\hline
\multirow{3}{*}{Heuristic-based~\cite{liu2023differentially}} & None &77.60  &77.60  &77.60  \\
 & Full Parameter &80.80  &76.30  &80.10  \\ 
 & Heuristic-based &79.70  &75.80  &79.10  \\
 \hline \hline
\multirow{3}{*}{\textbf{\ourmethod{}}} & None &78.10  &79.00  &79.70  \\
 & Full Parameter &79.80  &79.90  &81.00  \\
 & \ourmethod{} &78.50  &78.40  &80.60   \\
\hline
 \multirow{3}{*}{\textbf{\ourmethod{}+LoRA}} & None &79.50  &80.30  &80.50  \\
 & Full Parameter &\textbf{82.00}  &80.40  &\textbf{82.40}  \\
 & \ourmethod{}+LoRA &80.60  &\textbf{81.00}  &80.90   \\
\hline
\end{tabular}
}
\label{tab:API}
\end{table}

\subsection{Impact of Remote API}

Since \ourmethod{} leverages knowledge from a remote API during the synthetic-data construction stage, one may question whether the comparison with baselines is entirely fair. To address this concern, we strengthen the baselines by allowing them to also use remote API access. Specifically, for each baseline, we construct API-enhanced variants that first use the remote API to generate a differentially private synthetic dataset, then perform an additional pre-fine-tuning stage on this synthetic data, and finally perform the original DP fine-tuning stage on the private dataset. We consider two pre-fine-tuning strategies: full-parameter pre-fine-tuning, which updates all model parameters on the synthetic data, and method-aligned pre-fine-tuning, which uses the same adaptation strategy as the final DP fine-tuning method, such as LoRA for DP-LoRA and heuristic layer selection for heuristic-based DP fine-tuning. We tune the hyperparameters of each variant independently for a fair comparison.

We also study the impact of different remote APIs, including Claude Haiku, GPT-4o-mini, and GPT-4o, with additional API and implementation details provided in \autoref{app:setup}. In this section, we fix the privacy budget to $\varepsilon=5$, the foundation model to RoBERTa, and the task to MNLI.

\autoref{tab:API} reports the results. Overall, API-enhanced baselines do not consistently outperform their original counterparts. In some cases, pre-fine-tuning on API-generated synthetic data improves performance, while in others the gain is limited or even disappears. This suggests that directly using synthetic data for pre-fine-tuning imposes a much stronger requirement on data quality than using it for selection. The synthetic data must be accurate, task-aligned, and sufficiently informative to provide a useful optimization signal for model adaptation.
This also creates a difficult privacy-budget trade-off. Allocating more privacy budget to synthetic-data construction can improve the quality of the synthetic data, but leaves less budget for the final DP fine-tuning stage. Conversely, allocating less budget preserves more privacy resources for final fine-tuning, but the synthetic data may become too noisy or mismatched to provide meaningful benefit during pre-fine-tuning. Across the tested APIs, we observe broadly similar trends: changing the remote API does not fundamentally change whether API-enhanced pre-fine-tuning is helpful. This indicates that the main bottleneck is not the specific remote API used, but the difficulty of converting privacy-limited synthetic data into an effective pre-fine-tuning signal.

In contrast, \ourmethod{} benefits more reliably from remote API knowledge because it uses synthetic data only for layer selection rather than direct model adaptation. This imposes a weaker requirement on synthetic data quality: the data only needs to preserve enough task structure to distinguish better layer subsets from worse ones. Importantly, even after strengthening all baselines with remote API access and independently tuned pre-fine-tuning variants, \ourmethod{} still achieves the best performance across all tested APIs. This further supports that the gain of \ourmethod{} does not simply come from access to stronger generators, but from using synthetic data in a DP-specific and selection-oriented way.
\section{Conclusion}
We studied selective fine-tuning for differentially private LLM adaptation, showing that DP fine-tuning should consider not only the \emph{form} of updates, as in existing PEFT methods, but also \emph{where} noisy updates are applied. We identified key challenges in directly adapting non-private selection methods, including privacy cost, unstable noisy estimates, and mismatch with DP optimization. To address them, we proposed \ourmethod{}, a DP-specific framework combining lightweight DP synthetic data construction, layer-level selection, and worst-case perturbation. Experiments show that \ourmethod{} consistently improves the privacy--utility trade-off over existing baselines under the same privacy guarantees.


\appendix

\section*{Ethical Considerations}
We conducted this research in accordance with established ethical guidelines and best practices. All experiments were performed in a controlled local environment using publicly available datasets and open-source models. The study relies exclusively on public benchmark data and does not involve human participants, user studies, or the collection of private or personally identifiable information. The proposed method is designed to improve privacy-preserving model adaptation under formal differential privacy guarantees.

\section*{Open Science}
The implementation of our evaluation and the proposed \ourmethod{} is available at:
\url{https://anonymous.4open.science/r/DPSelectiveFT/README.md}.
The code is released for research and evaluation purposes only, with the goal of facilitating reproducibility and further study of differentially private LLM fine-tuning.

\section*{Generative AI Usage}
ChatGPT was used for minor grammar correction, language polishing, and writing assistance. All technical ideas, experimental designs, analyses, and conclusions were developed and verified by the authors.

\newpage

\section{Algorithms}
\label{subsec:algorithm}
\autoref{alg:DP-SFT} summarizes \ourmethod{}. The algorithm first constructs a labeled DP synthetic dataset from API-generated candidates, and splits it into training and validation subsets. It then evaluates each candidate layer subset under worst-case perturbation on the synthetic data and selects the subset with the best validation performance. Finally, it performs DP-AdamW fine-tuning on the private dataset while updating only the selected layers.

\begin{algorithm}[ht]
\caption{Differentially Private Selective Fine-tuning}
\label{alg:DP-SFT}
\begin{algorithmic}[1]
\REQUIRE Private dataset $\mathcal{D}_{\mathrm{pri}}$; foundation model $f_{\theta^{(0)}}$; remote API $\mathcal{A}$; task-level prompt template $\pi$; public encoder $\phi(\cdot)$; candidate layer family $\mathcal{Q}$; privacy budgets $(\varepsilon_{\mathrm{syn}},\delta_{\mathrm{syn}})$ and $(\varepsilon_{\mathrm{ft}},\delta_{\mathrm{ft}})$
\ENSURE Final DP fine-tuned model $\theta^\star$

\STATE \textbf{Stage 1: Synthetic data generation}
\STATE Generate unlabeled candidate pool $\mathcal{C}$ from $\mathcal{A}$ using $\pi$
\STATE Construct labeled synthetic dataset
\[
\widetilde{\mathcal{D}}_{\mathrm{syn}}
\leftarrow
\mathsf{DP\text{-}Select\text{-}Label}
\bigl(\mathcal{D}_{\mathrm{pri}}, \mathcal{C}, \phi, \varepsilon_{\mathrm{syn}}, \delta_{\mathrm{syn}}\bigr)
\]
\STATE Split
\[
\widetilde{\mathcal{D}}_{\mathrm{syn}}
=
\widetilde{\mathcal{D}}_{\mathrm{syn}}^{\mathrm{tr}}
\cup
\widetilde{\mathcal{D}}_{\mathrm{syn}}^{\mathrm{val}}
\]

\STATE \textbf{Stage 2: Parameter selection}
\STATE Decompose $\theta^{(0)}$ layer-wise as $\theta^{(0)}=\{\theta_1^{(0)},\dots,\theta_L^{(0)}\}$
\FOR{each $\Lambda \in \mathcal{Q}$}
    \STATE Initialize temporary model $\theta \leftarrow \theta^{(0)}$ with only $\theta_\Lambda$ trainable
    \STATE Compute synthetic training loss
    \[
    \mathcal{L}_{\mathrm{syn}}(\theta;\widetilde{\mathcal{D}}_{\mathrm{syn}}^{\mathrm{tr}})
    =
    \frac{1}{|\widetilde{\mathcal{D}}_{\mathrm{syn}}^{\mathrm{tr}}|}
    \sum_{(\tilde{x},\tilde{y})\in\widetilde{\mathcal{D}}_{\mathrm{syn}}^{\mathrm{tr}}}
    \ell(\theta;\tilde{x},\tilde{y})
    \]
    \STATE Compute gradient on selected layers
    \[
    g_\Lambda
    =
    \nabla_{\theta_\Lambda}
    \mathcal{L}_{\mathrm{syn}}(\theta;\widetilde{\mathcal{D}}_{\mathrm{syn}}^{\mathrm{tr}})
    \]
    \STATE Compute worst-case perturbation
    \[
    \xi_\Lambda^\star
    \in
    \arg\max_{\|\xi\|_2\le\rho_\Lambda}
    \mathcal{L}_{\mathrm{syn}}
    \bigl(
    \theta-\eta(g_\Lambda+\xi);
    \widetilde{\mathcal{D}}_{\mathrm{syn}}^{\mathrm{tr}}
    \bigr)
    \]
    \STATE Score candidate subset on validation split
    \[
    \mathrm{Perf}(\Lambda)
    =
    \mathrm{Acc}\bigl(\theta'(\Lambda);\widetilde{\mathcal{D}}_{\mathrm{syn}}^{\mathrm{val}}\bigr)
    \]
\ENDFOR
\STATE Select layers
\[
\Lambda^\star
=
\arg\max_{\Lambda\in\mathcal{Q}}
\mathrm{Perf}(\Lambda)
\]

\STATE \textbf{Stage 3: DP fine-tuning}
\STATE Initialize $\theta \leftarrow \theta^{(0)}$ with only $\theta_{\Lambda^\star}$ trainable
\FOR{each iteration $t=1,\dots,T$}
    \STATE Sample mini-batch $\mathcal{B}_t \subseteq \mathcal{D}_{\mathrm{pri}}$
    \STATE Perform DP-AdamW update on $\theta_{\Lambda^\star}$ using $\mathcal{B}_t$
\ENDFOR
\STATE \textbf{return} $\theta^\star$
\end{algorithmic}
\end{algorithm}

\newpage

\section{Additional Experimental Setup}
\label{app:setup}

\para{Models and remote APIs.}
We evaluate \ourmethod{} on four widely used foundation models from the OPT~\cite{zhang2022opt} and RoBERTa~\cite{liu2019roberta} families. These models are representative open-source backbones for local fine-tuning and are commonly used in prior work on private and non-private NLP adaptation. For remote API access in the synthetic-data generation stage, we use OpenAI GPT-4o-mini, OpenAI GPT-4o\footnote{\url{https://openai.com/index/hello-gpt-4o/}}, and Claude Haiku 4.5\footnote{\url{https://www.anthropic.com/news/claude-haiku-4-5}}. These APIs are widely deployed, provide strong general-purpose text generation capability, and are inexpensive enough for lightweight candidate data generation at the scale considered in our experiments. This setup matches our threat model: the defender can use popular commercial APIs for non-sensitive, high-level prompt-based generation, while all operations involving private data are performed locally.

\para{Dataset details.}
We evaluate on four benchmark datasets from GLUE~\cite{GLUE} and related standard NLP benchmarks. SST-2 and SST-5~\cite{socher2013recursive} are used for sentiment analysis: SST-2 is binary sentiment classification, while SST-5 requires fine-grained sentiment classification. MNLI~\cite{williams2018broad} is used for natural language inference, where the goal is to determine whether a hypothesis is entailed by, neutral with respect to, or contradicted by a premise. QQP~\cite{cer2017semeval} is used for paraphrase detection, where the task is to determine whether two questions are semantically equivalent.

\para{Baseline details.}
Full DP fine-tuning updates all model parameters under DP training. DP-LoRA~\cite{hu2022lora} constrains the form of updates by introducing low-rank adaptation modules and serves as a representative non-selection-based PEFT baseline. Heuristic layer selection~\cite{liu2023differentially} is a manually designed, non-adaptive strategy that updates only attention layers. This heuristic is motivated by prior observations that modifying attention modules can substantially affect LLM behavior~\cite{shi2023toast, hu2022lora}. It allows us to assess the benefit of automatic, perturbation-aware layer selection over hand-crafted selective tuning rules.

\para{Configuration of \ourmethod{}.}
We consider two variants of \ourmethod{}. The first directly updates the selected layers and evaluates the benefit of selective fine-tuning alone. The second, denoted as \ourmethod{}+LoRA, combines layer selection with LoRA and tests whether \ourmethod{} can complement non-selection-based PEFT methods by determining where LoRA modules should be applied.

The prompt templates $\pi$ used for different datasets are summarized in \autoref{subsec:prompts}. These prompts are intentionally high-level and task-oriented, and contain no private examples, identifiers, or dataset-specific records.

By default, we allocate $\varepsilon_{\mathrm{syn}}=0.3$ to the synthetic-data construction stage. This empirical choice keeps the privacy overhead of selection small while leaving most of the privacy budget for final DP fine-tuning. After constructing the synthetic dataset, we randomly split it into synthetic training and validation subsets with a ratio of $7{:}3$.

For the candidate layer family $\mathcal{Q}$, we adopt an efficiency-oriented approximation. Instead of evaluating arbitrary layer subsets, we evaluate each layer independently during selection. Specifically, for each layer $\ell$, we temporarily unfreeze only that layer while keeping all other layers frozen, and then compute its validation performance under our perturbation-aware selection procedure. We rank all layers by validation performance and select the top-$k$ layers for final DP fine-tuning. Unless otherwise specified, we set $k=15$.

Unless otherwise specified, we use the same failure probability for synthetic-data construction and final fine-tuning, i.e., $\delta_{\mathrm{syn}}=\delta_{\mathrm{ft}}$. The privacy budget consumed by synthetic-data construction is included in the end-to-end privacy budget, and the remaining budget is used for final DP fine-tuning.

\para{Configuration of generation.}
We construct synthetic datasets via a variation-based expansion process initialized from a small set of seed samples. For each seed, we generate $\mu=3$ variations, resulting in $L=\mu+1=4$ candidates per instance. The number of seed samples $N_{\text{seed}}$ is set to $800$ for MNLI and $600$ for QQP, SST-2, and SST-5, yielding a total of $L \times N_{\text{seed}}$ synthetic samples. To ensure a consistent data scale and a fair comparison, we subsample the synthetic data to match the size of the downstream training set for both layer selection and model pre-fine-tuning.

We encode all candidate samples using \texttt{sentence-t5-xl} to obtain semantic representations for subsequent data generation and selection. The text generation is performed using a primary generator deployed on the Azure platform\footnote{\url{https://portal.azure.com/}}, with temperature of $1.2$ to encourage diversity. The maximum generation length is set to $30$ for MNLI, $25$ for QQP, and $10$ for SST-2 and SST-5. Variations are constructed using task-specific templates (please refer to ~\autoref{subsec:prompts}), with a perturbation probability $p$ controlling the degree of modification. We set $p=0.5$ for MNLI and SST-5, and $p=0.6$ for QQP and SST-2. To further regulate the diversity of generated samples, we apply a word-level variation scale of $30$ for MNLI and $20$ for the other datasets, together with a maximum token-level scaling factor of $2$.

To ensure differential privacy, we calibrate the noise multiplier following~\cite{xie2024differentially} and inject noise during the data generation process. Since high-fidelity synthetic data is not strictly required in our setting, we limit the number of iterative refinement epochs to $3$ to reduce the overall magnitude of injected noise. Moreover, due to the rewriting process, repeated iterations may introduce misalignment between generated sentences and their labels. Restricting the number of epochs helps mitigate such accumulation of semantic drift and label inconsistency. All candidate samples are generated independently and filtered in subsequent stages.

\section{Prompts}
\label{subsec:prompts}

This section summarizes the prompt templates used for synthetic data generation and paraphrasing. All prompts are intentionally high-level and task-oriented. They describe only the dataset format and task semantics, without including private examples, identifiers, or dataset-specific records.

\begin{figure}[!ht]
\centering
\begin{tcblisting}{
    enhanced,
    width=0.95\linewidth,
    colback=gray!5,
    colframe=black!75,
    coltitle=white,
    colbacktitle=black!75,
    fonttitle=\bfseries\large,
    title={Prompt Template for SST-2 Synthetic Data Generation},
    boxrule=1pt,
    arc=2.5mm,
    left=3mm,
    right=3mm,
    top=2mm,
    bottom=2mm,
    listing only,
    listing options={style=promptstyle}
}
Generate a short movie review snippet in the style of the SST-2 dataset.
Snippets are concise opinionated fragments (5-25 words) taken from real movie reviews.
The sentiment should be either positive or negative, and clearly expressed through natural language.
\end{tcblisting}
\caption{Prompt template for SST-2 synthetic data generation.}
\label{fig:prompt-sst2}
\Description{}
\end{figure}

\begin{figure}[!ht]
\centering
\begin{tcblisting}{
    enhanced,
    width=0.95\linewidth,
    colback=gray!5,
    colframe=black!75,
    coltitle=white,
    colbacktitle=black!75,
    fonttitle=\bfseries\large,
    title={Prompt Template for SST-2 Paraphrasing},
    boxrule=1pt,
    arc=2.5mm,
    left=3mm,
    right=3mm,
    top=2mm,
    bottom=2mm,
    listing only,
    listing options={style=promptstyle}
}
Paraphrase the movie review snippet while strictly preserving the original sentiment.
Output ONLY the paraphrased snippet -- no quotes, no labels, no explanations.
\end{tcblisting}
\caption{Prompt template for SST-2 paraphrasing.}
\label{fig:prompt-sst2-rephrase}
\Description{}
\end{figure}

\begin{figure}[!ht]
\centering
\begin{tcblisting}{
    enhanced,
    width=0.95\linewidth,
    colback=gray!5,
    colframe=black!75,
    coltitle=white,
    colbacktitle=black!75,
    fonttitle=\bfseries\large,
    title={Prompt Template for SST-5 Synthetic Data Generation},
    boxrule=1pt,
    arc=2.5mm,
    left=3mm,
    right=3mm,
    top=2mm,
    bottom=2mm,
    listing only,
    listing options={style=promptstyle}
}
Generate a short movie review snippet in the style of the SST-5 dataset.
Snippets are concise opinionated fragments (5-25 words) taken from real movie reviews.
The sentiment must be one of five categories: very negative, negative, neutral, positive, or very positive.
\end{tcblisting}
\caption{Prompt template for SST-5 synthetic data generation.}
\label{fig:prompt-sst5}
\Description{}
\end{figure}

\begin{figure}[!ht]
\centering
\begin{tcblisting}{
    enhanced,
    width=0.95\linewidth,
    colback=gray!5,
    colframe=black!75,
    coltitle=white,
    colbacktitle=black!75,
    fonttitle=\bfseries\large,
    title={Prompt Template for SST-5 Paraphrasing},
    boxrule=1pt,
    arc=2.5mm,
    left=3mm,
    right=3mm,
    top=2mm,
    bottom=2mm,
    listing only,
    listing options={style=promptstyle}
}
Paraphrase the movie review snippet while strictly preserving both the original sentiment and its intensity.
Output ONLY the paraphrased snippet -- no quotes, no labels, no explanations.
\end{tcblisting}
\caption{Prompt template for SST-5 paraphrasing.}
\label{fig:prompt-sst5-rephrase}
\Description{}
\end{figure}

\begin{figure}[!ht]
\centering
\begin{tcblisting}{
    enhanced,
    width=0.95\linewidth,
    colback=gray!5,
    colframe=black!75,
    coltitle=white,
    colbacktitle=black!75,
    fonttitle=\bfseries\large,
    title={Prompt Template for MNLI Synthetic Data Generation},
    boxrule=1pt,
    arc=2.5mm,
    left=3mm,
    right=3mm,
    top=2mm,
    bottom=2mm,
    listing only,
    listing options={style=promptstyle}
}
Generate a premise-hypothesis sentence pair for the MultiNLI task.
entailment: the hypothesis is definitely true given the premise.
neutral: the hypothesis may or may not be true given the premise.
contradiction: the hypothesis is definitely false given the premise.

Output ONLY the premise and hypothesis separated by a SINGLE TAB CHARACTER.
\end{tcblisting}
\caption{Prompt template for MNLI synthetic data generation.}
\label{fig:prompt-mnli}
\Description{}
\end{figure}

\begin{figure}[!ht]
\centering
\begin{tcblisting}{
    enhanced,
    width=0.95\linewidth,
    colback=gray!5,
    colframe=black!75,
    coltitle=white,
    colbacktitle=black!75,
    fonttitle=\bfseries\large,
    title={Prompt Template for MNLI Paraphrasing},
    boxrule=1pt,
    arc=2.5mm,
    left=3mm,
    right=3mm,
    top=2mm,
    bottom=2mm,
    listing only,
    listing options={style=promptstyle}
}
Paraphrase the tab-separated premise-hypothesis pair while preserving the logical relationship.
Output ONLY:
premise    hypothesis (one tab separator, no other text or newlines).
\end{tcblisting}
\caption{Prompt template for MNLI paraphrasing.}
\label{fig:prompt-mnli-rephrase}
\Description{}
\end{figure}

\begin{figure}[!ht]
\centering
\begin{tcblisting}{
    enhanced,
    width=0.95\linewidth,
    colback=gray!5,
    colframe=black!75,
    coltitle=white,
    colbacktitle=black!75,
    fonttitle=\bfseries\large,
    title={Prompt Template for QQP Synthetic Data Generation},
    boxrule=1pt,
    arc=2.5mm,
    left=3mm,
    right=3mm,
    top=2mm,
    bottom=2mm,
    listing only,
    listing options={style=promptstyle}
}
Generate a question pair for the Quora Question Pairs (QQP) task.
duplicate: both questions ask the same thing in different words.
not_duplicate: the questions ask clearly different things.

Output ONLY the two questions separated by a SINGLE TAB CHARACTER.
No labels, no newlines, no extra text.
\end{tcblisting}
\caption{Prompt template for QQP synthetic data generation.}
\label{fig:prompt-qqp}
\Description{}
\end{figure}

\begin{figure}[!ht]
\centering
\begin{tcblisting}{
    enhanced,
    width=0.95\linewidth,
    colback=gray!5,
    colframe=black!75,
    coltitle=white,
    colbacktitle=black!75,
    fonttitle=\bfseries\large,
    title={Prompt Template for QQP Paraphrasing},
    boxrule=1pt,
    arc=2.5mm,
    left=3mm,
    right=3mm,
    top=2mm,
    bottom=2mm,
    listing only,
    listing options={style=promptstyle}
}
Paraphrase the tab-separated question pair while preserving whether they are duplicate or not.
Output ONLY:
question1    question2 (one tab separator, no other text or newlines).
\end{tcblisting}
\caption{Prompt template for QQP paraphrasing.}
\label{fig:prompt-qqp-rephrase}
\Description{}
\end{figure}


\section{Discussion}
\label{sec:discussion}

\para{Why does the synthetic-data-based selection work?}
A natural question is why \ourmethod{} can make reliable selection decisions using synthetic data, even though the synthetic data is only a rough approximation of the private task distribution. We believe the key reason is that parameter selection imposes a much lower requirement on data quality than final model training. The goal of the synthetic dataset in \ourmethod{} is not to fully substitute for the private dataset or to support high-quality standalone model adaptation. Instead, it only needs to preserve enough coarse task structure to distinguish relatively better layer subsets from worse ones. In other words, selection mainly depends on comparative signals, rather than on perfectly accurate modeling of the target distribution. This makes the process substantially more tolerant to moderate distribution mismatch or label noise in the synthetic data. Moreover, our selection is performed at the layer level rather than the parameter level, which further reduces the granularity of the decision and improves robustness to synthetic-data imperfections. Taken together, these properties make it possible for a lightweight DP synthetic dataset to serve as an effective surrogate for privacy-free selection.

\para{Limitations and future work.}
Our work has several limitations that suggest directions for future research. First, \ourmethod{} requires choosing the number of selected top-$k$ layers, which introduces an additional hyperparameter. Although our experiments show that the method is reasonably stable across a range of $k$ values, automatically determining an appropriate selection budget would make the framework more practical and easier to deploy. Future work may explore adaptive or data-dependent strategies for selecting $k$ under privacy constraints. Second, our approach assumes that the remote API can generate task-relevant candidate samples from high-level prompts. When the private task is highly specialized, domain-specific, or contains substantial jargon, this assumption may become weaker, and the quality of the candidate pool may degrade accordingly. In such cases, the effectiveness of the synthetic-data-based selection stage may also decrease. One possible direction is to replace the generic API-based candidate generator with a differentially private task-specific generator trained locally on the private data. Such a generator could better capture the structure of specialized domains while preserving privacy. We leave this extension to future work.

\section{Related Work}
\label{sec:related}
\textbf{Differential privacy (DP)}~\cite{dwork2008differential, dwork2014algorithmic} provides a rigorous mathematical framework for quantifying privacy guarantees, typically parameterized by a privacy budget that captures the trade-off between protection and utility. It has been widely applied to diverse data analysis tasks, including synthetic dataset generation~\cite{yuan2023privgraph, zhang2021privsyn, wang2023privtrace, du2023ldptrace}, marginal release~\cite{zhang2018calm}, range queries~\cite{du2021ahead}, and streaming data processing~\cite{wang2021continuous}. For machine learning, a seminal advancement is DP-SGD~\cite{abadi2016deep}, which clips per-example gradients and injects calibrated noise to provide formal privacy guarantees. Subsequent works have sought to mitigate DP's utility overhead through new training algorithms~\cite{papernot2016semi, yu2021not} or context-specific relaxations of the DP definition~\cite{machanavajjhala2008privacy, ghazi2021deep}.

Orthogonal to our work, recent studies improve robustness to DP noise without explicitly reducing the trainable model size. In particular, several works~\cite{park2023differentially, shi2023make, wang2025sharpness, wang2024dpadapter} incorporate sharpness-aware minimization (SAM) or adversarial training techniques into DP learning, either during pre-training or private fine-tuning, aiming to make model parameters less sensitive to the perturbations introduced by DP optimization. In contrast, \ourmethod{} improves robustness to DP noise by restricting where noisy updates are applied during private fine-tuning. These two directions are complementary: robustness-oriented training reduces parameter sensitivity to noise, while \ourmethod{} reduces the parameter subspace exposed to noisy updates.
However, existing robustness-oriented DP methods have mainly been evaluated on image classification tasks. It remains unclear whether they directly generalize to large language models, where the model scale, optimization dynamics, and fine-tuning objectives differ substantially from vision models. We leave a systematic study of the compatibility between \ourmethod{} and robustness-oriented pre-training methods for future work.

\textbf{DP synthetic text generation} aims to produce text data that preserves useful distributional properties of the original data while limiting individual-level leakage~\cite{mattern2022differentially}. Existing studies have explored training or fine-tuning language models under DP to generate synthetic text, and have shown that downstream models trained on such synthetic data can sometimes approach the performance of models trained with DP directly on real data~\cite{mattern2022differentially, yue2023synthetic, kurakin2023harnessing}. However, achieving a favorable fidelity--privacy trade-off through direct DP training can require large batch sizes and long training schedules~\cite{anil2022large}, which makes it costly for large language models.

More recent work therefore considers API-based generation, where strong pre-trained models are queried with non-sensitive prompts and DP is applied only to the data-dependent selection or adaptation stage~\cite{lin2023differentially, xie2024differentially}. This line of work is particularly relevant to our setting because it separates high-capability text generation from privacy-sensitive operations, allowing the private data to remain local while still benefiting from strong generative models.

In our implementation, we instantiate the synthetic data construction stage using a recent API-based DP synthetic text generation method~\cite{xie2024differentially}. We emphasize that this component is not tied to a specific generator. \ourmethod{} only requires a DP synthetic dataset that preserves enough task-relevant structure for layer selection. Therefore, stronger future DP synthetic data generators can be naturally plugged into \ourmethod{} and may further improve its performance.

\clearpage

\onecolumn

\section{Theoretical Results}
\label{app:theory}

In this appendix, we provide the formal statements and proofs of the theoretical results discussed in \autoref{subsec:analysis}.

\subsection{Proof of Theorem 1}
\label{app:proof_thm1}

We first formalize the one-step effect of DP noise under selective fine-tuning.

\begin{theorem}[Selective fine-tuning trades learning signal for reduced DP noise damage]
\label{thm:selection_signal_damage}
Let $\Lambda\subseteq[L]$ be a selected layer subset with $d_{\Lambda}$ trainable parameters. Assume that $F_{\mathrm{pri}}(\theta)$ is $\beta$-smooth with respect to $\theta_{\Lambda}$. Consider one DP update restricted to $\Lambda$:
\begin{equation}
\theta_{\Lambda}^{+}
=
\theta_{\Lambda}
-
\eta(g_{\Lambda}+z_{\Lambda}),
\qquad
\|g_{\Lambda}\|_2\leq C,
\qquad
z_{\Lambda}\sim \mathcal{N}(0,\sigma^2 C^2 I_{d_{\Lambda}}).
\end{equation}
Then,
\begin{equation}
\mathbb{E}
\left[
F_{\mathrm{pri}}(\theta^{+})
\right]
\leq
F_{\mathrm{pri}}(\theta)
-
\eta
\left\langle
\nabla_{\theta_{\Lambda}}F_{\mathrm{pri}}(\theta),
g_{\Lambda}
\right\rangle
+
\frac{\beta\eta^2}{2}\|g_{\Lambda}\|_2^2
+
\frac{\beta\eta^2}{2}d_{\Lambda}\sigma^2 C^2.
\end{equation}
In particular, if $g_{\Lambda}$ is aligned with the projected gradient and satisfies
\begin{equation}
\left\langle
\nabla_{\theta_{\Lambda}}F_{\mathrm{pri}}(\theta),
g_{\Lambda}
\right\rangle
\geq
a_{\Lambda}
\left\|
\nabla_{\theta_{\Lambda}}F_{\mathrm{pri}}(\theta)
\right\|_2^2
\end{equation}
for some alignment coefficient $a_{\Lambda}>0$, then
\begin{equation}
\mathbb{E}
\left[
F_{\mathrm{pri}}(\theta^{+})
\right]
\leq
F_{\mathrm{pri}}(\theta)
-
\eta a_{\Lambda}
\left\|
\nabla_{\theta_{\Lambda}}F_{\mathrm{pri}}(\theta)
\right\|_2^2
+
\frac{\beta\eta^2}{2}C^2
+
\frac{\beta\eta^2}{2}d_{\Lambda}\sigma^2 C^2.
\end{equation}
Moreover, defining the signal retention ratio as
\begin{equation}
s(\Lambda)
=
\frac{
\left\|
\nabla_{\theta_{\Lambda}}F_{\mathrm{pri}}(\theta)
\right\|_2^2
}{
\left\|
\nabla_{\theta}F_{\mathrm{pri}}(\theta)
\right\|_2^2
},
\end{equation}
the bound can be written as
\begin{equation}
\mathbb{E}
\left[
F_{\mathrm{pri}}(\theta^{+})
\right]
\leq
F_{\mathrm{pri}}(\theta)
-
\eta a_{\Lambda}
s(\Lambda)
\left\|
\nabla_{\theta}F_{\mathrm{pri}}(\theta)
\right\|_2^2
+
\frac{\beta\eta^2}{2}C^2
+
\frac{\beta\eta^2}{2}d_{\Lambda}\sigma^2 C^2.
\end{equation}
Thus, the expected one-step progress is governed by a trade-off between the retained learning signal $s(\Lambda)$ and the DP-noise damage term $d_{\Lambda}\sigma^2 C^2$.
\end{theorem}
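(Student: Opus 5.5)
The plan is to apply the $\beta$-smoothness inequality of Assumption~1 to the restricted update. Frozen coordinates $\theta_{\bar\Lambda}$ are unchanged, so $\theta^{+}-\theta$ is supported on $\Lambda$ and equals $\Delta=-\eta(g_\Lambda+z_\Lambda)$. Substituting gives the pointwise bound
\begin{equation*}
F_{\mathrm{pri}}(\theta^{+})\le F_{\mathrm{pri}}(\theta)-\eta\bigl\langle \nabla_{\theta_\Lambda}F_{\mathrm{pri}}(\theta),g_\Lambda+z_\Lambda\bigr\rangle+\frac{\beta\eta^2}{2}\|g_\Lambda+z_\Lambda\|_2^2 .
\end{equation*}

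Next we take the expectation over $z_\Lambda$ only. We condition on $\theta$ and $g_\Lambda$, treating the clipped gradient as fixed with respect to the fresh Gaussian noise. Since $\mathbb{E}[z_\Lambda]=0$, the linear noise term vanishes. Expanding the square,
\begin{equation*}
\mathbb{E}\|g_\Lambda+z_\Lambda\|_2^2=\|g_\Lambda\|_2^2+2\langle g_\Lambda,\mathbb{E}z_\Lambda\rangle+\mathbb{E}\|z_\Lambda\|_2^2=\|g_\Lambda\|_2^2+d_\Lambda\sigma^2C^2,
\end{equation*}
using $\mathbb{E}\|z_\Lambda\|_2^2=\operatorname{tr}(\sigma^2C^2I_{d_\Lambda})$. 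This yields the first displayed bound exactly.

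For the second bound, we substitute the alignment hypothesis into the (negative) first-order term. This replaces $-\eta\langle\nabla_{\theta_\Lambda}F_{\mathrm{pri}},g_\Lambda\rangle$ by the larger quantity $-\eta a_\Lambda\|\nabla_{\theta_\Lambda}F_{\mathrm{pri}}(\theta)\|_2^2$. We then bound $\|g_\Lambda\|_2^2\le C^2$ via the clipping assumption. For the third form, we use the definition of $s(\Lambda)$ to write $\|\nabla_{\theta_\Lambda}F_{\mathrm{pri}}\|_2^2=s(\Lambda)\|\nabla_\theta F_{\mathrm{pri}}\|_2^2$. This is an identity, valid whenever the full gradient is nonzero. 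If $\nabla_\theta F_{\mathrm{pri}}(\theta)=0$, the projected gradient also vanishes and the bound holds under any convention for $s(\Lambda)$, which we note as a degenerate case.

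The main point requiring care is the conditioning. The expectation must be taken over the DP noise with $g_\Lambda$ held fixed, since $g_\Lambda$ depends on the minibatch but not on $z_\Lambda$. We also need smoothness to apply along the restricted direction, with $F$ viewed as a function of $\theta_\Lambda$ for fixed $\theta_{\bar\Lambda}$. Once these are stated, the rest is direct substitution.
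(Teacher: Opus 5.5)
Your proposal is correct and follows the paper's proof essentially step by step: you apply smoothness to $\Delta_\Lambda=-\eta(g_\Lambda+z_\Lambda)$ and take the expectation over the zero-mean Gaussian noise with $g_\Lambda$ held fixed, which gives $\|g_\Lambda\|_2^2+d_\Lambda\sigma^2C^2$, and you then use the alignment condition, the clipping bound $\|g_\Lambda\|_2\le C$, and the identity defining $s(\Lambda)$. Your extra remarks on conditioning and on the degenerate case $\nabla_\theta F_{\mathrm{pri}}(\theta)=0$, where $s(\Lambda)$ is undefined, are sensible refinements that the paper leaves implicit.
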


\begin{proof}
Let
\begin{equation}
\Delta_{\Lambda}
=
-\eta(g_{\Lambda}+z_{\Lambda}).
\end{equation}
By $\beta$-smoothness of $F_{\mathrm{pri}}$ with respect to $\theta_{\Lambda}$, we have
\begin{equation}
\begin{aligned}
F_{\mathrm{pri}}(\theta^{+})
&\leq
F_{\mathrm{pri}}(\theta)
+
\left\langle
\nabla_{\theta_{\Lambda}}F_{\mathrm{pri}}(\theta),
\Delta_{\Lambda}
\right\rangle
+
\frac{\beta}{2}
\|\Delta_{\Lambda}\|_2^2 \\
&=
F_{\mathrm{pri}}(\theta)
-
\eta
\left\langle
\nabla_{\theta_{\Lambda}}F_{\mathrm{pri}}(\theta),
g_{\Lambda}+z_{\Lambda}
\right\rangle
+
\frac{\beta\eta^2}{2}
\|g_{\Lambda}+z_{\Lambda}\|_2^2 .
\end{aligned}
\end{equation}
Taking expectation over $z_{\Lambda}$ and using $\mathbb{E}[z_{\Lambda}]=0$, we obtain
\begin{equation}
\begin{aligned}
\mathbb{E}
\left[
F_{\mathrm{pri}}(\theta^{+})
\right]
&\leq
F_{\mathrm{pri}}(\theta)
-
\eta
\left\langle
\nabla_{\theta_{\Lambda}}F_{\mathrm{pri}}(\theta),
g_{\Lambda}
\right\rangle
+
\frac{\beta\eta^2}{2}
\mathbb{E}\|g_{\Lambda}+z_{\Lambda}\|_2^2 \\
&=
F_{\mathrm{pri}}(\theta)
-
\eta
\left\langle
\nabla_{\theta_{\Lambda}}F_{\mathrm{pri}}(\theta),
g_{\Lambda}
\right\rangle
+
\frac{\beta\eta^2}{2}
\left(
\|g_{\Lambda}\|_2^2
+
\mathbb{E}\|z_{\Lambda}\|_2^2
\right) \\
&=
F_{\mathrm{pri}}(\theta)
-
\eta
\left\langle
\nabla_{\theta_{\Lambda}}F_{\mathrm{pri}}(\theta),
g_{\Lambda}
\right\rangle
+
\frac{\beta\eta^2}{2}\|g_{\Lambda}\|_2^2
+
\frac{\beta\eta^2}{2}d_{\Lambda}\sigma^2 C^2 .
\end{aligned}
\end{equation}
This proves the first claim.

If the alignment condition holds, then
\begin{equation}
-
\eta
\left\langle
\nabla_{\theta_{\Lambda}}F_{\mathrm{pri}}(\theta),
g_{\Lambda}
\right\rangle
\leq
-
\eta a_{\Lambda}
\left\|
\nabla_{\theta_{\Lambda}}F_{\mathrm{pri}}(\theta)
\right\|_2^2.
\end{equation}
Using $\|g_{\Lambda}\|_2\leq C$, we obtain
\begin{equation}
\mathbb{E}
\left[
F_{\mathrm{pri}}(\theta^{+})
\right]
\leq
F_{\mathrm{pri}}(\theta)
-
\eta a_{\Lambda}
\left\|
\nabla_{\theta_{\Lambda}}F_{\mathrm{pri}}(\theta)
\right\|_2^2
+
\frac{\beta\eta^2}{2}C^2
+
\frac{\beta\eta^2}{2}d_{\Lambda}\sigma^2 C^2.
\end{equation}
Finally, by the definition of $s(\Lambda)$,
\begin{equation}
\left\|
\nabla_{\theta_{\Lambda}}F_{\mathrm{pri}}(\theta)
\right\|_2^2
=
s(\Lambda)
\left\|
\nabla_{\theta}F_{\mathrm{pri}}(\theta)
\right\|_2^2.
\end{equation}
Substituting this identity into the previous inequality gives
\begin{equation}
\mathbb{E}
\left[
F_{\mathrm{pri}}(\theta^{+})
\right]
\leq
F_{\mathrm{pri}}(\theta)
-
\eta a_{\Lambda}
s(\Lambda)
\left\|
\nabla_{\theta}F_{\mathrm{pri}}(\theta)
\right\|_2^2
+
\frac{\beta\eta^2}{2}C^2
+
\frac{\beta\eta^2}{2}d_{\Lambda}\sigma^2 C^2.
\end{equation}
This completes the proof.
\end{proof}

\subsection{Proof of Theorem 2}
\label{app:proof_thm2}

We next formalize why worst-case perturbation provides a robust selection criterion for unseen DP noise.

\begin{theorem}[Worst-case selection generalizes to unseen DP noise]
\label{thm:worst_case_generalization}
Let $\mathcal{Q}$ be the family of candidate layer subsets. For each $\Lambda\in\mathcal{Q}$, define
\begin{equation}
W_{\mathrm{syn}}(\Lambda)
=
\sup_{\|\xi\|_2\leq \rho}
\mathcal{R}_{\mathrm{syn}}(\Lambda,\xi),
\qquad
\Lambda^{\star}
=
\arg\min_{\Lambda\in\mathcal{Q}}
W_{\mathrm{syn}}(\Lambda).
\end{equation}
Assume that, for all $\Lambda\in\mathcal{Q}$ and all $\|\xi\|_2\leq\rho$,
\begin{equation}
\left|
\mathcal{R}_{\mathrm{syn}}(\Lambda,\xi)
-
\mathcal{R}_{\mathrm{pri}}(\Lambda,\xi)
\right|
\leq
\tau,
\qquad
0
\leq
\mathcal{R}_{\mathrm{pri}}(\Lambda,\xi)
\leq
B.
\end{equation}
If the downstream DP perturbation $Z$ satisfies
\begin{equation}
\Pr(\|Z\|_2\leq\rho)\geq 1-\alpha,
\end{equation}
then
\begin{equation}
\mathbb{E}_{Z}
\left[
\mathcal{R}_{\mathrm{pri}}(\Lambda^{\star},Z)
\right]
\leq
\min_{\Lambda\in\mathcal{Q}}
\sup_{\|\xi\|_2\leq\rho}
\mathcal{R}_{\mathrm{pri}}(\Lambda,\xi)
+
2\tau
+
\alpha B.
\end{equation}
\end{theorem}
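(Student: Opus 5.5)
The plan is to condition on the high-probability event $E=\{\|Z\|_2\le\rho\}$ and treat the two pieces separately. Throughout, $\Lambda^\star$ is fixed, since it depends only on synthetic data and not on $Z$. Write
\[
\mathbb{E}_Z[\mathcal{R}_{\mathrm{pri}}(\Lambda^\star,Z)]=\mathbb{E}_Z[\mathcal{R}_{\mathrm{pri}}(\Lambda^\star,Z)\mathbf{1}_E]+\mathbb{E}_Z[\mathcal{R}_{\mathrm{pri}}(\Lambda^\star,Z)\mathbf{1}_{E^c}].
\]
The second term is at most $B\Pr(E^c)\le\alpha B$ by the boundedness assumption $0\le\mathcal{R}_{\mathrm{pri}}\le B$. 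For the first term, on $E$ the realized noise is admissible in the supremum, so pointwise $\mathcal{R}_{\mathrm{pri}}(\Lambda^\star,Z)\le\sup_{\|\xi\|_2\le\rho}\mathcal{R}_{\mathrm{pri}}(\Lambda^\star,\xi)=:W_{\mathrm{pri}}(\Lambda^\star)$. Since $\mathcal{R}_{\mathrm{pri}}\ge 0$, this gives $\mathbb{E}[\mathcal{R}_{\mathrm{pri}}\mathbf{1}_E]\le W_{\mathrm{pri}}(\Lambda^\star)\Pr(E)\le W_{\mathrm{pri}}(\Lambda^\star)$. Nonnegativity is used here so that multiplying by $\Pr(E)\le1$ only weakens the bound.

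The remaining task is to compare $W_{\mathrm{pri}}(\Lambda^\star)$ with $\min_{\Lambda}W_{\mathrm{pri}}(\Lambda)$ using the uniform transfer bound. Taking suprema of $\mathcal{R}_{\mathrm{syn}}\le\mathcal{R}_{\mathrm{pri}}+\tau$ and of $\mathcal{R}_{\mathrm{pri}}\le\mathcal{R}_{\mathrm{syn}}+\tau$ over the ball gives $|W_{\mathrm{syn}}(\Lambda)-W_{\mathrm{pri}}(\Lambda)|\le\tau$ for every $\Lambda\in\mathcal{Q}$, because suprema of uniformly close functions are uniformly close. Then I would run the standard ERM-style chain. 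Let $\Lambda^\dagger$ be a minimizer of $W_{\mathrm{pri}}$ over $\mathcal{Q}$; it exists if $\mathcal{Q}$ is finite, as it is in the algorithm. Otherwise take an $\epsilon$-minimizer and let $\epsilon\to0$. The chain is
\[
W_{\mathrm{pri}}(\Lambda^\star)\le W_{\mathrm{syn}}(\Lambda^\star)+\tau\le W_{\mathrm{syn}}(\Lambda^\dagger)+\tau\le W_{\mathrm{pri}}(\Lambda^\dagger)+2\tau,
\]
where the middle inequality is the defining optimality of $\Lambda^\star$. Combining the two paragraphs yields the claimed bound $\min_\Lambda\sup_{\|\xi\|\le\rho}\mathcal{R}_{\mathrm{pri}}(\Lambda,\xi)+2\tau+\alpha B$.

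The only delicate point is measurability and interpretation: $\mathcal{R}_{\mathrm{pri}}(\Lambda^\star,Z)$ must be a measurable function of $Z$, and the noise $Z$ must live in the same space as the $\xi$ indexing the supremum. I would state this as implicit in the setup rather than prove it. The arithmetic itself is routine.
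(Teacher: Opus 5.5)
Your proposal is correct and is essentially the paper's proof, presented in the opposite order. Both use the two-sided transfer bound $|W_{\mathrm{syn}}(\Lambda)-W_{\mathrm{pri}}(\Lambda)|\le\tau$ to get the chain $W_{\mathrm{pri}}(\Lambda^\star)\le W_{\mathrm{syn}}(\Lambda^\star)+\tau\le W_{\mathrm{syn}}(\Lambda)+\tau\le W_{\mathrm{pri}}(\Lambda)+2\tau$, then split $\mathbb{E}_Z$ on $\{\|Z\|_2\le\rho\}$ and bound the tail by $\alpha B$. One small point, shared with the paper: the tail step needs $\mathcal{R}_{\mathrm{pri}}(\Lambda^\star,\xi)\le B$ for $\|\xi\|_2>\rho$, which is the unrestricted Assumption~4 of the main text, not the ball-restricted boundedness written in the formal statement, so you should cite it in that form.
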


\begin{proof}
Define
\begin{equation}
W_{\mathrm{pri}}(\Lambda)
=
\sup_{\|\xi\|_2\leq\rho}
\mathcal{R}_{\mathrm{pri}}(\Lambda,\xi).
\end{equation}
By the uniform approximation assumption, for every $\Lambda\in\mathcal{Q}$,
\begin{equation}
W_{\mathrm{pri}}(\Lambda)
\leq
W_{\mathrm{syn}}(\Lambda)+\tau,
\qquad
W_{\mathrm{syn}}(\Lambda)
\leq
W_{\mathrm{pri}}(\Lambda)+\tau.
\end{equation}
Since $\Lambda^\star$ minimizes $W_{\mathrm{syn}}$, for any $\Lambda\in\mathcal{Q}$,
\begin{equation}
\begin{aligned}
W_{\mathrm{pri}}(\Lambda^\star)
&\leq
W_{\mathrm{syn}}(\Lambda^\star)+\tau \\
&\leq
W_{\mathrm{syn}}(\Lambda)+\tau \\
&\leq
W_{\mathrm{pri}}(\Lambda)+2\tau.
\end{aligned}
\end{equation}
Taking the minimum over $\Lambda\in\mathcal{Q}$ gives
\begin{equation}
\sup_{\|\xi\|_2\leq\rho}
\mathcal{R}_{\mathrm{pri}}(\Lambda^{\star},\xi)
\leq
\min_{\Lambda\in\mathcal{Q}}
\sup_{\|\xi\|_2\leq\rho}
\mathcal{R}_{\mathrm{pri}}(\Lambda,\xi)
+
2\tau.
\end{equation}

Now decompose the expectation over the event $\{\|Z\|_2\leq\rho\}$:
\begin{equation}
\begin{aligned}
\mathbb{E}_{Z}
\left[
\mathcal{R}_{\mathrm{pri}}(\Lambda^{\star},Z)
\right]
&=
\mathbb{E}_{Z}
\left[
\mathcal{R}_{\mathrm{pri}}(\Lambda^{\star},Z)
\mathbf{1}\{\|Z\|_2\leq\rho\}
\right]
+
\mathbb{E}_{Z}
\left[
\mathcal{R}_{\mathrm{pri}}(\Lambda^{\star},Z)
\mathbf{1}\{\|Z\|_2>\rho\}
\right] \\
&\leq
\sup_{\|\xi\|_2\leq\rho}
\mathcal{R}_{\mathrm{pri}}(\Lambda^{\star},\xi)
+
B\Pr(\|Z\|_2>\rho) \\
&\leq
\sup_{\|\xi\|_2\leq\rho}
\mathcal{R}_{\mathrm{pri}}(\Lambda^{\star},\xi)
+
\alpha B.
\end{aligned}
\end{equation}
Combining the two bounds yields
\begin{equation}
\mathbb{E}_{Z}
\left[
\mathcal{R}_{\mathrm{pri}}(\Lambda^{\star},Z)
\right]
\leq
\min_{\Lambda\in\mathcal{Q}}
\sup_{\|\xi\|_2\leq\rho}
\mathcal{R}_{\mathrm{pri}}(\Lambda,\xi)
+
2\tau
+
\alpha B.
\end{equation}
This completes the proof.
\end{proof}

\end{document}